\def\eqref#1{equation~\ref{#1}}
\def\1{\bm{1}}
\DeclareMathAlphabet{\mathsfit}{\encodingdefault}{\sfdefault}{m}{sl}
\SetMathAlphabet{\mathsfit}{bold}{\encodingdefault}{\sfdefault}{bx}{n}
\theoremstyle{plain}
\newtheorem{thm}{Theorem}[section]
\newtheorem{prop}[thm]{Proposition}
\newtheorem{lemm}[thm]{Lemma}
\newtheorem{cor}[thm]{Corollary}
\theoremstyle{definition}
\newtheorem{defn}{Definition}
\newtheorem{exmp}[thm]{Example}
\theoremstyle{remark}
\newtheorem{rmk}{Remark}[section]
\newcommand{\jacc}{\textbf{J}}
\newcommand{\jacob}{\textnormal{\textbf{J}}}
\newcommand{\conv}[1]{\star_{#1}}
\newcommand{\mfldconv}{\mathcal{M}_{\mathbf{k}, \mathbf{s}, \mathbf{d}, \sigma}}
\let\svthefootnote\thefootnote
\newcommand\freefootnote[1]{%
  \let\thefootnote\relax%
  \footnotetext{\hspace{-1.5em}#1}%
  \let\thefootnote\svthefootnote%
}
\title{Learning on a Razor’s Edge: \\
Identifiability and Singularity of Polynomial Neural Networks}
\author{Vahid Shahverdi *, Giovanni Luca Marchetti * \& Kathl\'en Kohn * \\
Department of Mathematics\\
KTH Royal Institute of Technology\\
Stockholm, Sweden \\
\texttt{\{vahidsha,glma,kathlen\}@kth.se} 
}
\begin{document}

\maketitle

\begin{abstract}
We study function spaces parametrized by neural networks, referred to as neuromanifolds. Specifically, we focus on deep Multi-Layer Perceptrons (MLPs) and Convolutional Neural Networks (CNNs) with an activation function that is a sufficiently generic polynomial. First, we address the identifiability problem, showing that, for  almost all functions in the neuromanifold of an MLP, there exist only finitely many parameter choices yielding that function. For CNNs, the  parametrization is  generically one-to-one. As a consequence, we compute the dimension of the neuromanifold. Second, we describe singular points of neuromanifolds. We characterize singularities completely for CNNs, and partially for MLPs. In both cases, they arise from sparse subnetworks. For MLPs, we prove that these singularities often correspond to critical points of the mean-squared error loss, which does not hold for CNNs. This provides a geometric explanation of the sparsity bias of MLPs. All of our results leverage tools from algebraic geometry.       
\end{abstract}

\begin{figure}[h!]
    \centering
    \includegraphics[width=0.57\linewidth]{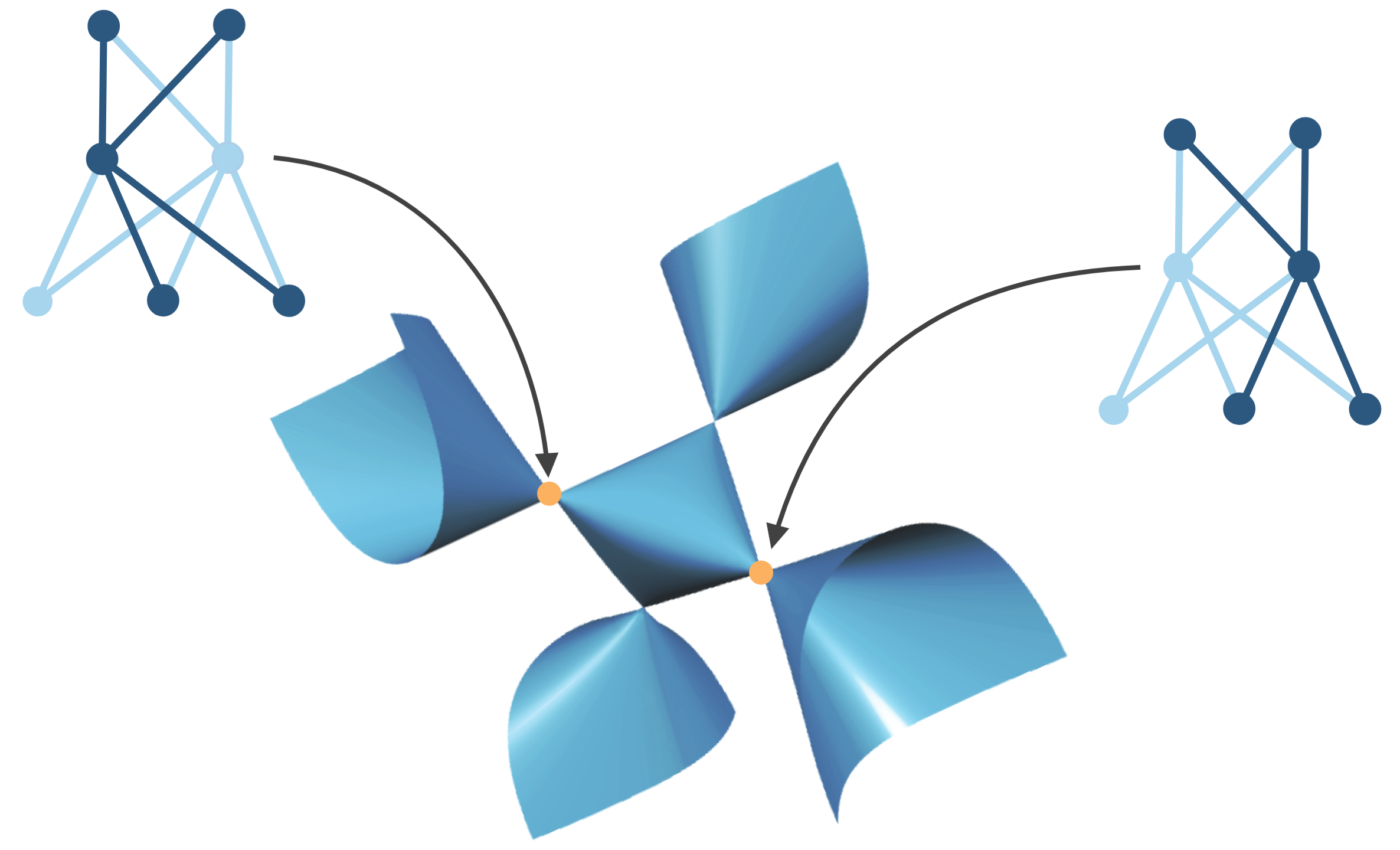}
    \caption{Subnetworks define singular points (orange) of the neuromanifold. }
    \label{fig:firstpage}
\end{figure}

\section{Introduction}\label{sec:introd}
\freefootnote{*Equal contribution.}Machine learning models parametrize spaces of functions, often referred to as \emph{neuromanifolds} \citep{calin2020deep, marchetti2025invitationneuroalgebraicgeometry}. Their geometry governs several learning aspects, ranging from expressivity and sample complexity to the training dynamics. Neuromanifolds are central to statistical learning theory---where they are referred to as hypothesis spaces---and information geometry \citep{amari2016information}, and have been analyzed from the perspective of various mathematical disciplines. 

A classical question in machine learning is \emph{identifiability}---the problem of describing the parameters that correspond to the same function. Several works have formally addressed this question for specific activation functions, such as Tanh \citep{fefferman1994reconstructing}, Sigmoid \citep{vlavcic2022neural}, ReLU \citep{grigsby2023hidden, bona2023parameter}, and monomials \citep{finkel2024activation,usevich2025identifiability}.
Identifiability can be interpreted as characterizing the redundancies or `symmetries' of the parametrization of the neuromanifold \citep{lim2024empirical}. The degree of redundancy determines the \emph{dimension} of the neuromanifold, which, in turn, measures the sample complexity of the corresponding model \citep{cucker2002mathematical}. Moreover, non-identifiable functions often correspond to \emph{singularities} of the neuromanifold. These are special points where the model is not a smooth manifold, such as edges, cusps, or self-intersections. 
As advocated by the field of Singular Learning Theory \citep{watanabe2009algebraic}, singular points play an important role for learning, e.g., in terms of generalization error \citep{wei2022deep} and training stability \citep{amari2006singularities,wei2008dynamics}.

In this work, we investigate the identifiability and singularities of both Multi-Layer Perceptrons (MLPs) and Convolutional Neural Networks (CNNs). 
Instead of focusing on a single activation function, we consider a large class of activation functions, namely,  generic polynomials of large degree. 
This is motivated by the fact that polynomials can approximate arbitrary continuous functions, and thus, polynomial neural networks approximate arbitrary networks. 
Crucially, considering polynomial activations makes geometric analysis more feasible for two reasons. 
First, polynomial MLPs are the only ones whose neuromanifolds live in finite-dimensional ambient spaces \cite[Theorem~1]{leshno1993multilayer}. Second, it allows us to employ tools from algebraic geometry \citep{kileel2019expressive, trager2019pure}. The latter is a rich field of mathematics that is particularly suitable for analyzing symmetries and singular spaces. 
Polynomial networks are the central focus of \emph{neuroalgebraic geometry} \citep{marchetti2025invitationneuroalgebraicgeometry}---an emerging field bridging algebraic geometry and theoretical deep learning, where our work naturally fits.  

Our main results apply to sufficiently generic polynomial activation functions of large degree: 

\textbf{Identifiability:} We prove that almost all functions in the neuromanifold of an MLP correspond to only finitely many parameters (Theorem \ref{thm:dimmainbody}).
This result is in line with the widespread belief that general MLPs have discrete parameter symmetries (coming from permutations of neurons in each layer). 
However, the only available proofs of this claim are for Tanh and Sigmoid activations \citep{fefferman1994reconstructing,vlavcic2022neural}.
Our result extends the identifiability studies for monomial activations \citep{finkel2024activation,usevich2025identifiability} to the broad class of generic polynomials. 
In particular, we deduce that the dimension of the neuromanifold coincides with the number of parameters. This resolves the dimension conjecture by \citep{kileel2019expressive} in a general form. For CNNs, we prove a stronger result: almost all functions are uniquely identifiable, i.e., they correspond to a single choice of parameters (Theorem \ref{thm:cnn_regular_app}).


\textbf{Singularities:} We consider sparse \emph{subnetworks}, where a subset of neurons is inactive. We prove that these subnetworks (under appropriate assumptions) are  singular points of the neuromanifold, both for MLPs (Theorem \ref{thm:singular}) and CNNs (Theorem \ref{thm:singcnn}). We then introduce the notion of \emph{critically exposed} parameter sets, that contain critical points of the mean-squared error loss with positive probability over the dataset (Definition \ref{defn:exposed}). We prove that subnetworks of MLPs are critically exposed (Theorem \ref{thm:exposed_new}), while they are not in the case of CNNs (Proposition~\ref{prop:cnnexp}).   

Our results can be interpreted from the perspective of \emph{sparsity bias}---the empirically-observed phenomenon that neural networks tend to discard neurons during their training process, effectively converging to a function parametrized by a smaller subarchitecture. This is closely related to the popular `lottery ticket hypothesis' \citep{frankle2018lottery}---the suggestion that the model implicitly identifies and amplifies subnetworks that are particularly efficient for the given task, resulting in sparse representations of data. Our singularity results provide a geometric explanation of these phenomena. Indeed, our notion of critically exposedness formalizes the idea of a bias of the training dynamics towards the given set, since critical points are local attractors for stochastic gradient flows \citep{chen2023stochastic}. 
Our results on critical exposedness of subnetworks indicate that the sparsity bias is exhibited by MLPs, but not by pure single-channel CNNs. This is in line with recent empirical findings for convolutional architectures \citep{blumenfeld2020beyond}.

\section{Related Work}\label{sec:relwork}

\paragraph{Algebraic geometry of deep learning.} As anticipated, a line of research in theoretical deep learning---recently termed 
`neuroalgebraic geometry' \citep{marchetti2025invitationneuroalgebraicgeometry}---explores the study of neuromanifolds of polynomial neural networks through the lens of algebraic geometry. Motivated by the fact that polynomials can approximate arbitrary continuous functions, the goal of neuroalgebraic geometry is to approach fundamental  problems in machine learning via tools from algebra. One such problem is identifiability, which represents a core focus of neuroalgebraic geometry. Several models have been  analyzed in the literature, ranging from MLPs with linear \citep{trager2019pure} and monomial activations \citep{kileel2019expressive, finkel2024activation, kubjas2024geometry, marchetti2024harmonics}, to CNNs with linear  \citep{kohn2022geometry, kohn2024function, shahverdi2024algebraic} and monomial activations \citep{shahverdi2024geometryoptimizationpolynomialconvolutional}, to (un-normalized) attention-based networks \citep{henry2024geometrylightningselfattentionidentifiability}. In this work, we contribute to this line of research by addressing identifiability for MLPs and CNNs with (generic) polynomial activation functions. Singularities are also central in neuroalgebraic geometry, due to their role in the learning process. However, they are formally understood only for linear MLPs \citep{trager2019pure} and monomial CNNs \citep{shahverdi2024geometryoptimizationpolynomialconvolutional}. In both cases, they coincide with subnetworks of the corresponding architecture. The general relation between singularities and subnetworks is conjectural \citep{marchetti2025invitationneuroalgebraicgeometry}. In this work, we show that subnetworks often define singular points for MLPs and CNNs, further sedimenting this relation. \ref{sec:convbias}. 

\paragraph{Implicit sparsity bias.} Several works have theoretically analyzed the tendency of neural networks to converge to sparse data representations. A line of research \citep{woodworth2020kernel, nacson2022implicit,pesme2021implicit, andriushchenko2023sgd} has shown that, for deep diagonal linear networks, the training dynamics, when initialized around the origin, implicitly penalizes the $\ell_1$ norm of the weights, inducing sparsity in the representation. A similar bias towards low-rank solutions has been shown for deep linear networks trained with a regularized loss \citep{kunin2019loss, ziyin2022exact, wang2023implicit}. Closely related to our work, these biases have been recently reformulated in terms of subnetworks \citep{chen2023stochastic}. Most of the previous literature focuses on simple models, and  on tools and ideas from (stochastic) dynamical systems theory. In contrast, we promote a purely geometric perspective, explaining the subnetwork bias in terms of the singularities and the parametrization of the neuromanifold. Moreover, we focus on general models--specifically, deep networks with polynomial activations. This generality is enabled by the powerful tools from algebraic geometry.

\paragraph{Singular learning theory.} The role of singularities in machine learning---and especially their effect on the training dynamics \citep{amari2006singularities,wei2008dynamics}---has been promoted by \emph{Singular Learning Theory} (SLT) \citep{watanabe2009algebraic, watanabe2007almost, amari2003learning, wei2022deep}. The latter falls into the more general framework of information geometry, focusing on the Riemannian geometry of the Fisher information metric. Crucially, however, the notion of singularity in SLT drastically differs from the one adopted in this work. Following the formalism of neuroalgebraic geometry, we consider singular points of the neuromanifold in the classical algebro-geometric sense---see Section \ref{sec:singularity} and Appendix \ref{sec:app_sing}. In contrast, singularities in SLT are parameters where the metric tensor, once pulled back, is degenerate. This happens due to extra degrees of freedom in parameter space, or due to criticalities of the parametrization. Both these cases can lead to singular or smooth points of the neuromanifold. Vice versa, a singular point can arise in neuromanifolds that are parametrized regularly, where the pulled-back metric tensor is non-degenerate; this is the case for polynomial CNNs, as we will discuss in Section \ref{sec:convbias}. Therefore, the two notions of singularity are different and incomparable. In a sense, our results extend ideas similar to the ones from SLT to a function space perspective, i.e., in terms of the geometry of the neuromanifold. 

\section{Background}
In this section, we overview the basic notions around algebraic geometry, deep neural networks and their neuromanifolds.

\subsection{Zariski Topology}\label{sec:zar}
Throughout this work, we will leverage on methods from (polynomial) algebra and algebraic geometry. Specifically, we will often use the Zariski topology, which we recall here (in the context of real vector spaces).

\begin{defn}   
Fix a positive integer $n$. The \emph{Zariski topology} on $\mathbb{R}^n$ is the topology whose closed sets are the algebraic ones, i.e., solution sets of systems of polynomial equations in $n$ variables.
\end{defn}

Now, the Zariski topology exhibits a striking property: it is \emph{irreducible}, i.e., any non-empty open set is dense. Moreover, dense sets in the Zariski topology are also dense in the Euclidean one. As such, for a Zariski-open set, it suffices to find a single element in it, to conclude that `almost all' points of $\mathbb{R}^n$ belong to it. Based on this, elements belonging to open sets are referred to as \emph{generic}. We will leverage on irreducibility in several arguments, which will allow us to obtain results that hold for generic objects (e.g., generic parameters, and generic activation functions).

\subsection{Neuromanifolds and Subnetworks of MLPs}
Fix a function $\sigma \colon \mathbb{R} \rightarrow \mathbb{R}$, a sequence of $L > 1$ positive integers $d_0, \ldots, d_L$, and, for every $i=1, \ldots, L$, a matrix $W_i \in \mathbb{R}^{d_i \times d_{i-1}}$.  
\begin{defn}
A \emph{Multi-Layer Perceptron} (MLP) with architecture $\mathbf{d} = (d_0, \ldots, d_L)$, activation function $\sigma$ and weights $\mathbf{W} = (W_1, \ldots, W_{L})$ is the map $f_\mathbf{W} \colon \mathbb{R}^{d_0} \rightarrow \mathbb{R}^{d_L}$ given by the composition:
\begin{equation}\label{eq:mlpdef}
f_\mathbf{W} = W_{L} \circ \sigma \circ \cdots \circ \sigma \circ W_1,
\end{equation}
where $\sigma$ is applied coordinate-wise.
\end{defn}
We now introduce the function spaces parametrized by  neural networks. Let $\mathcal{W} = \bigoplus_{i=1}^L \mathbb{R}^{d_i \times d_{i-1}}$ be the parameter space of an MLP, and  $\varphi \colon \mathcal{W} \ni \mathbf{W} \mapsto f_\mathbf{W}$ be its parametrization map. 

\begin{defn}
The \emph{neuromanifold} of an MLP with architecture $\mathbf{d}$ and activation function $\sigma$ is the image of the parametrization $\varphi$, i.e.,
\begin{equation}
\mathcal{M}_{\mathbf{d}, \sigma} = \left\{f_\mathbf{W} \ | \  \mathbf{W} \in  \mathcal{W} \right\}, 
\end{equation}
\end{defn}

As anticipated, from now on, we assume that $\sigma$ is a (univariate) polynomial of some degree $r > 1$:
\begin{equation} \label{eq:sigma}
    \sigma (x) = \sum_{i=0}^r a_i x^i. 
\end{equation}
In this case, $f_{\mathbf{W}}$ is a (multivariate and vector-valued) polynomial of degree at most $r^{L-1}$, i.e., $\mathcal{M}_{\mathbf{d}, \sigma} \subseteq \mathcal{V}$, where $\mathcal{V}$ is the space of all polynomial maps $\mathbb{R}^{d_0} \rightarrow \mathbb{R}^{d_{L-1}}$ of degree at most $r^{L-1}$. In particular, the neuromanifold lives in an ambient space $\mathcal{V}$ of functions, which is finite-dimensional and linear. This is unique to polynomial activations \cite[Theorem 1]{leshno1993multilayer}. Moreover, it follows immediately from the Tarski-Seidenberg theorem that  $\mathcal{M}_{\mathbf{d}, \sigma}$ is a \emph{semi-algebraic variety}, i.e., it can be defined by polynomial equalities and inequalities in $\mathcal{V}$. The inequalities can be removed by taking the closure of $\mathcal{M}_{\mathbf{d}, \sigma}$ in the Zariski topology of $\mathcal{V}$.

We also introduce the notion of a subnetwork of an MLP, which will be central in our results. For every $i=1, \ldots, L-1$, let $A_i \subseteq \{1, \ldots,  d_i \}$ and define $\mathbf{A} = ( A_1, A_2, \ldots, A_{L-1} )$. 
\begin{defn}
A parameter $\mathbf{W} \in \mathcal{W}$ is an \emph{$\mathbf{A}$-subnetwork} if for every $i=1, \ldots, L-1$ and every $j \in A_i$, the $j$-th column of $W_{i+1}$ vanishes. We say that $\mathbf{W}$ is a \emph{strict} subnetwork if, moreover,  the $j$-th row of $W_i$ vanishes for $j \in A_i$. 
\end{defn}
 For simplicity, we set $A_0 = A_L = \emptyset$, i.e., we do not allow subnetworks obtained by removing input or output neurons.
 
\subsection{Optimization}\label{sec:backopt}
We consider a regression problem with mean squared error objective. To this end, let $\mathcal{D}$ be a dataset, i.e., a finite subset $\mathcal{D} \subset \mathbb{R}^{d_0} \times \mathbb{R}^{d_L}$ representing input-output pairs. The corresponding loss $\mathcal{L}_{\mathcal{D}}\colon \mathcal{V} \rightarrow \mathbb{R}_{\geq 0}$ is given by: 
\begin{equation}
    \label{eq:loss}
    \mathcal{L}_\mathcal{D}(f) = \sum_{(x,y) \in \mathcal{D}} \| f(x) - y \|^2.
\end{equation}
An MLP is trained on the dataset $\mathcal{D}$ by minimizing $\mathcal{L}_\mathcal{D} \circ \varphi$ over $\mathcal{W}$. Since the loss is quadratic, \eqref{eq:loss} can be rephrased as $\mathcal{L}_\mathcal{D}(f) = Q(f - u)$, where $Q$ is a quadratic form on $\mathcal{V}$ and $u \in \mathcal{V}$, both depending on $\mathcal{D}$. We refer to \cite{trager2019pure, shahverdi2024geometryoptimizationpolynomialconvolutional, kubjas2024geometry} for extended discussions around this, including precise expressions for $Q$ and $u$ (depending on $\mathcal{D}$). These works also show that for large and generic $\mathcal{D}$, the quadric $Q$ is non-degenerate and $u$ is generic in $\mathcal{V}$. Therefore, with enough data, training the network amounts to optimizing a non-degenerate distance from a generic point in the ambient space of the neuromanifold.

Since optimization is typically performed via gradient descent, we will be interested in the \emph{critical points} (in parameter space) of the loss, where its gradient vanishes. These correspond to the equilibria of the gradient flow, and are therefore the stationary points of gradient descent. For a loss $\mathcal{L}_u(f) := Q(f - u )$ as above, it follows from the chain rule that $\mathbf{W} \in \mathcal{W}$ is a critical point of $\mathcal{L}_u \circ \varphi$ if, and only if, $f_\mathbf{W} - u $ is orthogonal according to the scalar product induced by $Q$ to the image of the differential of $\varphi$ based at $\mathbf{W}$. This geometric interpretation of criticality will be crucial in some of our arguments.

Note that we focus on whether a point is critical, disregarding the type of criticality, i.e., (local) minima/maxima and saddles. While this enables a purely-geometric treatment, studying the type of criticality is an interesting problem, since local minima are the actual (local) attractors of the gradient flow. However, this is more subtle, and goes beyond the scope of this work -- see Section \ref{sec:conclim}.  Moreover, it is worth mentioning that when noise is taken into account in the dynamics, arbitrary critical points can become (local) attractors, in a probabilistic sense. This is a general principle from stochastic processes, and has been discussed in the context of deep learning by \cite{chen2023stochastic}.

\section{Results}\label{sec:res}
In this section, we present our main results. We first focus on MLPs, and then consider CNNs. 

\subsection{Identifiability of MLPs}\label{sec:dims}
We now discuss identifiability of MLPs which, from the perspective of algebraic geometry, means to 
investigate the \emph{fibers} of the parametrization map $\varphi$. Formally, 
the fiber of $\varphi$ at $\mathbf{W}$ is defined as $\varphi^{-1}(f_\mathbf{W}) = \{\mathbf{W}' \in \mathcal{W} \mid f_\mathbf{W} = f_{\mathbf{W}'} \}$. 

To begin with, observe that if $\mathbf{W}$ is an $\mathbf{A}$-subnetwork, then  $f_\mathbf{W}$ coincides with the function obtained by removing from the architecture the neurons of the $i$-th layer with indices in $A_i$, resulting in an MLP with architecture $(d_0 - |A_0|, \ldots, d_L - |A_L|)$. This implies that $f_\mathbf{W}$ is independent of the $d_{i-1}$ entries of the $j$-th row of $W_i$ for $j \in A_i$. In particular, strict subnetworks parametrize the same function as the corresponding non-strict ones. Therefore, the fiber of $\varphi$ has a large fiber at $\mathbf{W}$---a fact has been observed in the SLT literature \citep{amari2006singularities, wei2008dynamics, orhan2018skip}.  More precisely, this fiber has dimension $\textnormal{dim}(\varphi^{-1}(f_\mathbf{W})) \geq \sum_{i=1}^{L-1} |A_i|\,d_{i-1}$.

In our main result, we prove that, if the activation $\sigma$ is generic and of large-enough degree, the fiber of $\varphi$ at a generic $\mathbf{W} \in \mathcal{W}$  is instead finite, i.e., $0$-dimensional. In other words, we show generic finite identifiability. This is equivalent, by the fiber-dimension theorem \cite[Chap. 1.6.3]{shafarevich}, to that the dimension of the neuromanifold is equal to the number of parameters.
The dimension of the neuromanifold is a fundamental invariant measuring the expressivity and sample complexity of the model \citep{kileel2019expressive, marchetti2025invitationneuroalgebraicgeometry}. From a mathematical perspective, computing the dimension is an involved problem. For polynomial MLPs, it was originally conjectured in \citep{kileel2019expressive}, and only recently established in \citep{finkel2024activation, usevich2025identifiability} for monomial activations $\sigma(x) = x^r$. We extend those result to more general polynomial activations. Note that monomial MLPs have infinite generic fibers, arising from neuron-wise scalings.

\begin{thm}
\label{thm:dimmainbody}
Suppose that $d_i > 1$ for $i = 0, \dots, L-1$, and let $\sigma$ be a generic polynomial of large enough degree $r \gg 0$ (depending on $\mathbf{d}$). Then, the generic fiber of the parametrization map $\varphi$ is finite. In particular,
\begin{equation}
    \dim(\mathcal{M}_{\mathbf{d}, \sigma})  = \dim(\mathcal{W})  = \sum_{i=1}^L d_i d_{i-1}. 
\end{equation}
\end{thm}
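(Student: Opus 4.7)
The plan is to show that the generic fiber of $\varphi$ is $0$-dimensional; by the fiber dimension formula this immediately yields $\dim(\mathcal{M}_{\mathbf{d}, \sigma}) = \dim(\mathcal{W})$. Equivalently, I would prove that the differential $d\varphi_\mathbf{W}\colon \mathcal{W} \to \mathcal{V}$ has full column rank $\dim(\mathcal{W})$ at a generic pair $(\mathbf{W}, \sigma)$. By upper semi-continuity of the rank, it suffices to exhibit a single pair $(\mathbf{W}^*, \sigma^*)$ where the Jacobian $d\varphi_{\mathbf{W}^*}$ attains the full rank $\dim(\mathcal{W})$; genericity then propagates to a Zariski-open neighborhood.

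I would construct such a pair by induction on the depth $L$. In the base case $L = 2$, the Jacobian of $f_\mathbf{W}(x) = \sum_{j=1}^{d_1} (W_2)_{\cdot, j}\, \sigma((W_1)_{j, \cdot}\, x)$ has two blocks: $\partial f/\partial W_2$ spans vectors of the form $\sigma(\langle (W_1)_{j,\cdot}, x \rangle) \cdot e_k$, while $\partial f/\partial W_1$ spans $(W_2)_{k,j}\, \sigma'(\langle (W_1)_{j,\cdot}, x\rangle) \cdot x_\ell$. The hypothesis $d_0 > 1$ provides enough input directions to distinguish the $x_\ell$ factors, while $r \gg 0$ ensures that the Taylor coefficients of $\sigma$ are sufficiently numerous to separate all these directions. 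For generic $(W_1, W_2, \sigma)$, a coefficient-matching argument on the monomial expansion of $f_\mathbf{W}$ in $x$ would then show that these directions are linearly independent, yielding full rank $d_1 d_0 + d_2 d_1 = \dim(\mathcal{W})$.

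For the inductive step $L - 1 \to L$, I would compose the $(L-1)$-layer network from the inductive hypothesis with an additional outer layer. The new Jacobian contains: (i) directions $\sigma(g(x)) \cdot e_k$ from varying $W_L$, where $g$ is the depth-$(L-1)$ pre-activation; and (ii) the pushforwards of the previous directions through the chain rule, i.e., multiplied by $W_L\, \mathrm{diag}(\sigma'(g(x)))$. Independence of the combined system would follow from a polynomial-identity argument exploiting the genericity of the coefficients of $\sigma$ together with the inductive full-rank property. The assumption $d_i > 1$ for $i < L$ is essential to avoid bottleneck degeneracies, and the condition $r \gg 0$ ensures $\mathcal{V}$ is large enough to accommodate all tangent directions.

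The main obstacle is the inductive step: the chain rule intertwines the new tangent directions with the previous ones through $\sigma'$ evaluated at the intermediate activations, and ruling out accidental linear dependences requires careful control. A cleaner alternative is to deform from the monomial case $\sigma_0(x) = x^r$ treated in \cite{finkel2024activation}, where the generic fiber has dimension exactly $\sum_{i=1}^{L-1} d_i$, accounting for the per-neuron rescaling symmetries $(W_i)_{j,\cdot} \mapsto \lambda\, (W_i)_{j,\cdot}$, $(W_{i+1})_{\cdot, j} \mapsto \lambda^{-r}\, (W_{i+1})_{\cdot, j}$. Perturbing $\sigma_0$ by a generic polynomial of strictly lower degree breaks each of these continuous symmetries simultaneously, and the expected rank jump of $\sum_{i=1}^{L-1} d_i$ would restore full column rank at the perturbed parameter. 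Making this rigorous amounts to exhibiting, for each broken symmetry, an explicit perturbation direction in $\mathcal{W}$ whose image under $d\varphi$ is independent of the previous Jacobian image---a computation that is local and only requires tracking how the leading coefficients of $\sigma$ interact with the rescaling action.
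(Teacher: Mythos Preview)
Your proposal sketches two strategies but carries neither to completion. Approach~1 (induction on depth) you yourself flag as problematic at the inductive step, and the chain-rule entanglement you describe is exactly why a direct layerwise argument is hard to close. Approach~2 (deform from the monomial activation $\sigma_0=x^r$) is closer in spirit to the paper and shares its starting point---the fiber description from \cite{finkel2024activation}---but the decisive step is left undone. You assert that a generic lower-degree perturbation ``breaks each of these continuous symmetries simultaneously'' and that verifying the rank jump is ``local and only requires tracking how the leading coefficients of $\sigma$ interact with the rescaling action.'' This is too optimistic: what must actually be shown is that, after perturbing $\sigma$, the images under $d\varphi$ of \emph{all} $\sum_{i=1}^{L-1} d_i$ scaling directions become simultaneously linearly independent of one another and of the old Jacobian image. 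A single generic lower-order term gives no structural handle on this joint independence, and upper semi-continuity only tells you the kernel cannot grow---it does not tell you it drops to zero.

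The paper supplies the missing mechanism by replacing the generic perturbation with a \emph{structured} one: it takes $\sigma(x)=\sum_{i=1}^{L} x^{\beta_i}$ with exponents satisfying $\beta_i>\beta_{i-1}^{L-1}$. This gap condition forces the degree-$\beta_i^{L-1}$ part of $f_{\mathbf{W}}$ to arise solely from the pure-$\beta_i$ composition (Lemma~\ref{lemm:sparse_activation}), so the fiber $\varphi^{-1}(f_{\mathbf{W}})$ is contained in the intersection, over all $i$, of the monomial fibers described in \cite{finkel2024activation}. Each such fiber is, up to permutations, a torus of diagonal rescalings; intersecting $L$ of them with distinct exponents forces every diagonal to be a scalar multiple of the identity (Lemma~\ref{lem:diag_fiber_scalar}), and the residual scalar constraints form a full-rank Vandermonde lattice system with finitely many torus solutions. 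Using \emph{several} monomial degrees at once is precisely what converts your heuristic ``perturb to break the symmetries'' into a concrete finite-intersection computation---and that is the idea your outline is missing.
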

\begin{proof}   
Since the result is highly technical, we summarize the proof here. A full proof is provided in  Appendix \ref{app:dimProofMLP}. 

We first show that it suffices to prove the result for a single activation. From the properties of the Zariski topology, it then follows that the result holds for a generic one. We then pick a particular activation of large degree $r$ whose coefficients are extremely sparse, i.e., the $a_i$ in \eqref{eq:sigma} vanishes for many $0 \leq  i \leq r$. For this activation, we argue that some homogeneous components of $f_{\mathbf{W}}$ (seen as a multivariate polynomial in its input) coincide with MLPs with the same weights $\mathbf{W}$, but where the activation function is a monomial. This implies that the fiber of $\varphi$ at $\mathbf{W}$ is contained in the intersection of the fibers of MLPs with monomial activation. As mentioned above, the generic fibers of these networks are known; they coincide with permutations of neurons at each layer, and neuron-wise rescalings. In order to conclude, we show that the intersection of the fibers of the monomial MLPs induces equations that the rescalings must satisfy. Via toric geometry we prove that those equations imply that there are only finitely many possible rescalings, concluding the proof.         
\end{proof}

Note that an explicit bound on the degree $r$ can be extracted from the proof. Namely, by combining Lemma \ref{lemm:sparse_activation} and Theorem \ref{thm:identmon}, it suffices that $r > (6m)^{2(L-1)^{L-1}}$, where $m= 2\max\{d_1,\ldots,d_{L-1}\}$. This bound is large, and likely to be extremely loose. Still, the existence of any bound suffices for polynomial approximation purposes, since it is possible to approximate (continuous) functions with polynomials of arbitrary-large degree -- see Remark \ref{rem:newapprox}.


\subsection{Singularities of MLPs}\label{sec:singularity}
Here, we show that subnetworks may yield singularities of the neuromanifold. To this end, recall that a point in a variety is singular if its tangent space has exceeding dimension, i.e., larger than the dimension of the variety; see Section \ref{sec:app_sing} for a formal definition. Intuitively, singularities are special points where a variety exhibits degeneracy---see Figure \ref{fig:singtikz} for an illustration. 

Before stating our result, we discuss a simple yet illustrative case, where subnetworks are known to coincide with singularities \citep{trager2019pure, marchetti2025invitationneuroalgebraicgeometry}. Consider a linear MLP, meaning that $\sigma(x) = x $ is the identity polynomial. Suppose that the architecture exhibits a `bottleneck', i.e., $d_0, d_L > d:=\min_{i=1, \ldots, L-1}d_i$. Then the neuromanifold contains all the linear maps $\mathbb{R}^{d_0} \rightarrow \mathbb{R}^{d_L}$ of rank at most $d$---a space known as \emph{determinantal variety}. The geometry of the latter is well understood: the singular points coincide exactly with the maps of rank strictly less than $d$. These can be represented as subnetworks, for example, by choosing $A_i$, where $d_i = d$, of an appropriate cardinality.

We now state the main result of this section, establishing singularity of subnetworks, under an architectural assumption similar to the bottleneck from  the example above.

\begin{thm}\label{thm:singular}
Suppose that $\sigma$ has more than $ \dim(\mathcal{M}_{\mathbf{d}, \sigma})  $ non-vanishing coefficients, i.e., $a_t \not = 0$ for at least $\dim(\mathcal{M}_{\mathbf{d}, \sigma}) + 1$ indices $t$\footnote{In particular, it must hold that $r > \dim(\mathcal{M}_{\mathbf{d}, \sigma}) $.}. 
For every $i=1, \ldots, L-1$, fix proper subsets $A_i \subset \{1, \ldots,  d_i \}$.  Suppose that for some $i = 0, \ldots, L-2$, $A_{i+1}\not = \emptyset$ and $d_i - |A_i| \leq d_j - |A_j|$ for all $j \leq i$. If $\mathbf{W} \in \mathcal{W}$ is a $\mathbf{A}$-subnetwork, then $f_\mathbf{W}$ is a singular point of $\mathcal{M}_{\mathbf{d}, \sigma}$. 
\end{thm}

\begin{proof}
We prove the claim for a generic $\mathbf{A}$-subnetwork $\mathbf{W}$. This is sufficient since the singular locus is Zariski closed.
For simplicity of notation, we assume $d_L=1$---the same proof extends to the general case. Consider the index $i\in \{0, \ldots, L-2\}$ provided by the hypothesis. For $j \in A_{i+1}$ and $k \not \in A_{i+2}$,  the chain rule implies:
\begin{equation}\label{eq:dertang}
\frac{\partial \varphi }{\partial W_{i+2}[k,j]} (\mathbf{W}) = \lambda \ \sigma\left( \sum_{t \not \in A_i} W_{i+1}[j, t] \  \sigma \circ f_{\mathbf{W}'}[t] \right),
\end{equation}
where $\lambda$ is a polynomial, and $\mathbf{W}'=(W_1, \ldots, W_i)$. This expression defines a vector in the tangent space of $\mathcal{M}_{\mathbf{d}, \sigma} $ at $f_\mathbf{W}$. Due to the genericity of $\mathbf{W}$, we have that $\lambda \not = 0$.

Now, $f_\mathbf{W}$ and $\lambda$ are unaffected when the $j$-th row of $W_{i+1}$ changes. As this row varies, the hypothesis on the coefficients of $\sigma$ implies that the polynomials $\sigma(\sum_{t \not \in A_{i}}W_{i+1}[j,t] \ z_t)$ over $\mathbb{R}^{d_i - |A_i|}$ span a linear space of dimension larger than $\dim(\mathcal{M}_{\mathbf{d}, \sigma})$. Due to the hypothesis on $d_j - |A_j|$ for $j \leq i$, the rank of the weight matrices $W_j$ is at least $d_i - |A_i|$ for generic $W_j$. Since, moreover, the image of the activation function $\sigma$ is, at least, a half-line in $\mathbb{R}$,  we conclude that the image of $\sigma \circ f_{\mathbf{W}'}$ has non-empty interior in $\mathbb{R}^{d_i - |A_i|}$ in the Euclidean topology. By irreducibility of the Zariski topology, this set is Zariski dense. In other words, $\sigma \circ f_{\mathbf{W}'}$ is a dominant map onto $\mathbb{R}^{d_i - |A_i|}$ in the Zariski topology. This implies that if some polynomials over $\mathbb{R}^{d_i - |A_i|}$ are linearly independent, then they remain such over $\mathbb{R}^{d_0}$ after pre-composing them by $\sigma \circ f_{\mathbf{W}'}$---this follows from the general principle of duality in algebraic geometry \citep[I, Corollary 1.2.7]{dieudonne1971elements}. In conclusion, the polynomials from \eqref{eq:dertang} generate a linear space of dimension larger than $ \dim(\mathcal{M}_{\mathbf{d}, \sigma})$, implying that $f_\mathbf{W}$ is a singular point. 
\end{proof}

Theorem \ref{thm:singular} leaves open the question of whether \emph{all} the singularities of the neuromanifold are parametrized by subnetworks---see Section \ref{sec:conclim} for a discussion. As mentioned above, this is true for linear MLPs, i.e., when $\sigma(x) =x$. Moreover, we provide an explicit example in the appendix (Section \ref{sec:exmp}), where we manually compute the singular locus (of the Zariski closure of the neuromanifold) for a small network with a cubical activation, and show that subnetworks exhaust all singularities. 


\begin{rmk}\label{rem:newapprox}
Theorem \ref{thm:dimmainbody} and \ref{thm:singular} can be potentially extended beyond polynomial activations via polynomial approximation, as anticipated in Section \ref{sec:introd}. We sketch an informal argument here -- a rigorous proof would require details from functional analysis, and we leave it for future investigation (see Section \ref{sec:conclim}). Both  theorems are based on showing that some derivatives of $\varphi$ are linearly independent. Namely, Theorem \ref{thm:dimmainbody} relies on showing that the Jacobian $\jacc_{\mathbf{W}} \varphi $ is full-rank (for generic $\mathbf{W}$), while the proof of Theorem \ref{thm:singular} argues that some derivatives of $\varphi$ at parameters in the same fiber are linearly independent (see \eqref{eq:dertang}). Now, suppose that $\sigma$ is not a polynomial, but belongs to a space $\mathcal{F}$ of smooth functions where polynomials (of large degree) are dense. Examples are the space of smooth functions on a compact domain with uniform norm, where density follows from the Stone-Weierstra\ss{} approximation theorem, or the space of functions that are analytic around $0$ with the $\ell^2$-distance on their Taylor coefficients, which can be approximated by polynomials by truncating their Taylor series. It is natural to expect that linear independence is an an open condition in $\mathcal{F}$. Indeed, since linear independence of functions is guaranteed when their Wronskian is non-vanishing \citep{bender2013advanced}, openness follows assuming that the Wronskian is continuous over $\mathcal{F}$. This means that if the theorems hold for an activation $\sigma \in \mathcal{F}$, then they hold in a neighborhood of $\sigma$ (with respect to the topology of $\mathcal{F}$). Since we know that they hold for generic polynomials of large degree, which are dense in $\mathcal{F}$, we conclude that the theorems hold in a dense open subset of $\mathcal{F}$, i.e., for `almost all' $\sigma \in \mathcal{F}$ (in an appropriate functional sense, depending on the topology of $\mathcal{F}$).   
\end{rmk}

 \subsection{Exposedness of MLPs}\label{sec:exp}
Our next main result is concerned with the role of subnetworks in the optimization process. Motivated by Section \ref{sec:backopt}, we consider objectives of the form $\mathcal{L}_u(f) = Q(f - u )$, where $f,u \in \mathcal{V}$ and $Q$ is a non-degenerate quadratic form over $\mathcal{V}$. We now introduce a central notion concerned with biases of the optimization process towards subsets of the parameter space $\mathcal{W}$. The notion is general, since it applies to any algebraic map $\varphi \colon \mathcal{W} \rightarrow \mathcal{V}$. In particular, it can be used for any polynomial machine learning model, including neural networks with polynomial activations of arbitrary architecture.   
\begin{defn}\label{defn:exposed}
    A subset $S \subseteq \mathcal{W} $ is \emph{critically exposed} if the set 
    \begin{equation}
        U_S = \{ u\in \mathcal{V} \mid \exists \mathbf{W} \in S  \quad  \nabla (\mathcal{L}_u \circ \varphi)(\mathbf{W}) = 0   \}         
    \end{equation}
    has a non-empty interior in $\mathcal{V}$.  
\end{defn}
In the above, the notion of interior is intended with respect to the Euclidean topology of $\mathcal{V}$, since the latter is the standard one in applications. However, all the following results will hold in the Zariski topology of $\mathcal{V}$. As discussed in Section \ref{sec:zar}, this will result in stronger statements. In fact, by irreducibility of the Zariski topology, if $U_S$ has non-empty interior, then it is dense. In this case, intuitively, `almost all' $u$'s belong to it, meaning that $S$ will contain critical points almost certainly. Therefore, from now on we stick to the Zariski topology.

Intuitively, the notion of critically exposedness formalizes the presence of a bias towards $S$ in the optimization process. Since $u$ depends on the dataset, the weights in a critically exposed set are equilibria of the training dynamics for a non-negligible amount of data. More concretely, if $u$ is sampled randomly from a full-support distribution over $\mathcal{V}$, then the weights in $S$ will be equilibria with positive probability.

\begin{rmk}\label{rem:vanexp}
If the Jacobian $\jacob_\mathbf{W}\varphi$  of $\varphi$ at $\mathbf{W}$  vanishes at some weights $\mathbf{W} \in \mathcal{W}$, then $\mathbf{W}$ is critical for every $u \in \mathcal{V}$ (actually, for any loss function). In particular, the singleton $S = \{ \mathbf{W} \}$ is critically exposed, and $U_S = \mathcal{V}$. In the case of MLPs with $\sigma(0) = 0$, this holds for the zero weights $\mathbf{W} = 0$.
\end{rmk}
Exposedness can be rephrased in geometric terms. From the discussion at the end of Section \ref{sec:backopt} it follows that $\mathbf{W} \in \mathcal{W}$ is a critical point of $\mathcal{L}_u$ if, and only if, the image of $\jacob_\mathbf{W}\varphi$ is orthogonal to $f_\mathbf{W} - u $, according to the scalar product over $\mathcal{V}$ induced by $Q$. In other words, the locus of $u$ for which a given $\mathbf{W}$ is critical coincides with the translated orthogonal complement $f_\mathbf{W} + \textnormal{im}( \jacob_\mathbf{W}\varphi)^\perp$ of the image of the Jacobian. As a consequence, 
\begin{equation}\label{eq:affamily}
U_S = \bigcup_{\mathbf{W} \in S} f_\mathbf{W} + \textnormal{im}( \jacob_\mathbf{W}\varphi)^\perp. 
\end{equation}
The right-hand side is the union of a family of affine subspaces of $\mathcal{V}$ indexed by $S$. Exposedness amounts to the statement that this union has full dimension $\textnormal{dim}(U_S) = \textnormal{dim}(\mathcal{V})$. Below, we leverage on this geometric strategy to show that strict subnetworks of MLPs are critically exposed. As we shall see in the next section, this drastically differs for convolutional architectures, for which the dimensionality of the union $U_S$ is typically deficient.

\begin{thm}\label{thm:exposed_new}
Suppose $\sigma(0) = 0$. For every $i=1, \ldots, L-1$, fix $A_i \subseteq \{1, \ldots,  d_i \}$, with $|A_i| < d_i$. Let $S \subseteq \mathcal{W}$ be the set of strict $\mathbf{A}$-subnetworks. Then any open set of $S$ is critically exposed.   
\end{thm}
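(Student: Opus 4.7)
The plan is to leverage the geometric formula \eqref{eq:affamily} and show that $U_{S'}$ has non-empty Zariski interior in $\mathcal{V}$ by realizing it as the image of a morphism whose differential at a generic point is surjective. The strategy splits naturally into two steps: first, identifying $\textnormal{im}(\jacob_\mathbf{W}\varphi)$ at a strict subnetwork with the image of the Jacobian of the restricted parametrization $\varphi|_S$; second, running a standard "tubular neighborhood" argument to conclude dominance.

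The first step is a direct chain-rule computation that exploits $\sigma(0) = 0$. Every partial derivative $\partial f_\mathbf{W}/\partial W_i[j, t]$ factors as a backpropagation coefficient from layer $i$ to the output, multiplied by the previous-layer activation $\sigma(z_{i-1}[t])$, where $z_{i-1}[t]$ denotes the $t$-th pre-activation of layer $i-1$. At a strict $\mathbf{A}$-subnetwork with $t \in A_{i-1}$, row $t$ of $W_{i-1}$ vanishes, hence $z_{i-1}[t] = 0$ and $\sigma(z_{i-1}[t]) = 0$. Symmetrically, for $j \in A_i$, the backpropagation coefficient begins with the factor $W_{i+1}[\cdot, j] = 0$ and also vanishes. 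Consequently, the partial derivatives with respect to all "zero" entries of $\mathbf{W}$ vanish, and so $\textnormal{im}(\jacob_\mathbf{W}\varphi)$ coincides with $\textnormal{im}(\jacob_\mathbf{W}(\varphi|_S))$.

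For the second step, note that $S$ is an irreducible linear subspace of $\mathcal{W}$, and the open subset $S'$ contains a non-empty Zariski open on which the rank of $\jacob_\mathbf{W}(\varphi|_S)$ attains its generic maximum. Restricting to such an open set, define the algebraic morphism $\Phi \colon N \to \mathcal{V}$, $(\mathbf{W}, v) \mapsto f_\mathbf{W} + v$, where $N$ is the algebraic vector bundle over $S'$ with fiber $\textnormal{im}(\jacob_\mathbf{W}\varphi)^\perp$ over $\mathbf{W}$. By \eqref{eq:affamily}, $U_{S'} \supseteq \textnormal{im}(\Phi)$. Computing the differential at $(\mathbf{W}, 0)$ for $\mathbf{W}$ generic yields
\[
d\Phi_{(\mathbf{W}, 0)}(\delta \mathbf{W}, \delta v) = \jacob_\mathbf{W}\varphi \cdot \delta \mathbf{W} + \delta v,
\]
whose image contains $\textnormal{im}(\jacob_\mathbf{W}(\varphi|_S)) + \textnormal{im}(\jacob_\mathbf{W}\varphi)^\perp = \mathcal{V}$, using the identification from the first step. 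Thus $d\Phi_{(\mathbf{W}, 0)}$ is surjective, $\Phi$ is dominant as an algebraic morphism, and its image contains a non-empty Zariski open subset of $\mathcal{V}$.

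The main obstacle will be the first step: carefully verifying the vanishing of all "zero-direction" partial derivatives at a strict subnetwork and confirming that the surviving partial derivatives reproduce the Jacobian of $\varphi|_S$. This vanishing genuinely requires both ingredients of the hypothesis, namely $\sigma(0) = 0$ (killing the activations attached to removed neurons) and the defining vanishing of strict subnetworks (killing backpropagated gradients through zeroed columns of $W_{i+1}$). Once this identification is in place, the remainder is essentially the well-known fact that the exponential/tubular map from the normal bundle of a subvariety is dominant onto the ambient space, so the argument concludes cleanly.
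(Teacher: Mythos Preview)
Your proposal is correct and follows essentially the same approach as the paper's proof. Both arguments hinge on the chain-rule identity that at a strict $\mathbf{A}$-subnetwork the partial derivatives of $\varphi$ in all ``zeroed'' directions vanish (using $\sigma(0)=0$ for the forward factor and the vanishing column of $W_{i+1}$ for the backward factor), so that $\textnormal{im}(\jacob_\mathbf{W}\varphi)=\textnormal{im}(\jacob_\mathbf{W}\varphi|_S)$; from there both deduce that $U_{S'}$ contains the embedded normal bundle of (the smooth locus of) $\varphi(S')$, which is full-dimensional in $\mathcal{V}$---your version simply makes this last step explicit via the map $\Phi$ and a differential computation.
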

\begin{proof}
Fix a strict $\mathbf{A}$-subnetwork $\mathbf{W}$. From the chain rule applied to \eqref{eq:mlpdef}, it follows that
\begin{equation}\label{eq:backprop}
   \frac{\partial \varphi }{\partial W_k[i,j]} (\mathbf{W}) = \begin{cases}
        0 & \textnormal{ if }  i \in A_k \textnormal{ or } j \in A_{k-1}, \\
        \frac{\partial \varphi|_S }{\partial W_k[i,j]}(\mathbf{W}) &  \textnormal{ otherwise.}
    \end{cases}
\end{equation}
In the above, $\varphi|_S \colon S \rightarrow \mathcal{V}$ denotes the restriction of $\varphi$ to $S$. Therefore, $\nabla (\mathcal{L}_u \circ \varphi) (\mathbf{W})$ is obtained by padding $\nabla (\mathcal{L}_u \circ \varphi|_S) (\mathbf{W})$ with vanishing entries. In particular, if $\mathbf{W}$ is a critical point for $\mathcal{L}_u \circ \varphi|_S$, it is critical for $\mathcal{L}_u \circ \varphi$ as well. 
Hence, $\textnormal{im}( \jacob_\mathbf{W}\varphi|_S)^\perp=\textnormal{im}( \jacob_\mathbf{W}\varphi)^\perp$, and so the union $U_S$ in \eqref{eq:affamily}
contains the embedded normal bundle of (the smooth locus of) $\varphi(S)$. This bundle has full dimension $\textnormal{dim}(\mathcal{V})$, even when restricted to the image of an open set $T \subseteq S$. This shows that $U_T$ is full-dimensional, and exposedness of $T$ follows.  
\end{proof}

\begin{rmk} \label{rem:exposedSubnetworkRepeated}
    Theorem \ref{thm:exposed_new} holds also for activations with $\sigma(0) \neq 0$, if one uses a different definition of strict $\mathbf{A}$-subnetworks $\mathbf{W}$: namely, for every layer $i$ and every $j \in A_i$, the $j$-th column of $W_{i+1}$ vanishes and the $j$-th row of $W_i$ coincides with its $k$-th row for some $k \notin A_i$. These repeated rows in the weight matrices lead to repeated columns in the Jacobian $\jacob_{\mathbf{W}} \varphi$, instead of 0-columns as in~\eqref{eq:backprop}.
\end{rmk}

The proof of Theorem \ref{thm:exposed_new} shows actually more: since $\jacob_\mathbf{W} \varphi$ is obtained by adding vanishing columns to $\jacob_\mathbf{W} \varphi|_S$, the gradient descent dynamics of $\mathcal{L} \circ \varphi$ and of $\mathcal{L} \circ \varphi|_S$ are isomorphic over $S$ for any differentiable loss $\mathcal{L} \colon \mathcal{V} \rightarrow \mathbb{R} $. Put simply, from a dynamical perspective, strict subnetworks are equivalent when seen as embedded in $\mathcal{W}$ or as MLPs with a smaller architecture. Similar considerations, with analogous Jacobian arguments, have been made in \citep{chen2023stochastic}.

\begin{figure}
\centering
    \begin{subfigure}[b]{.28\linewidth}
        \centering
        \includegraphics[width=.95\linewidth]{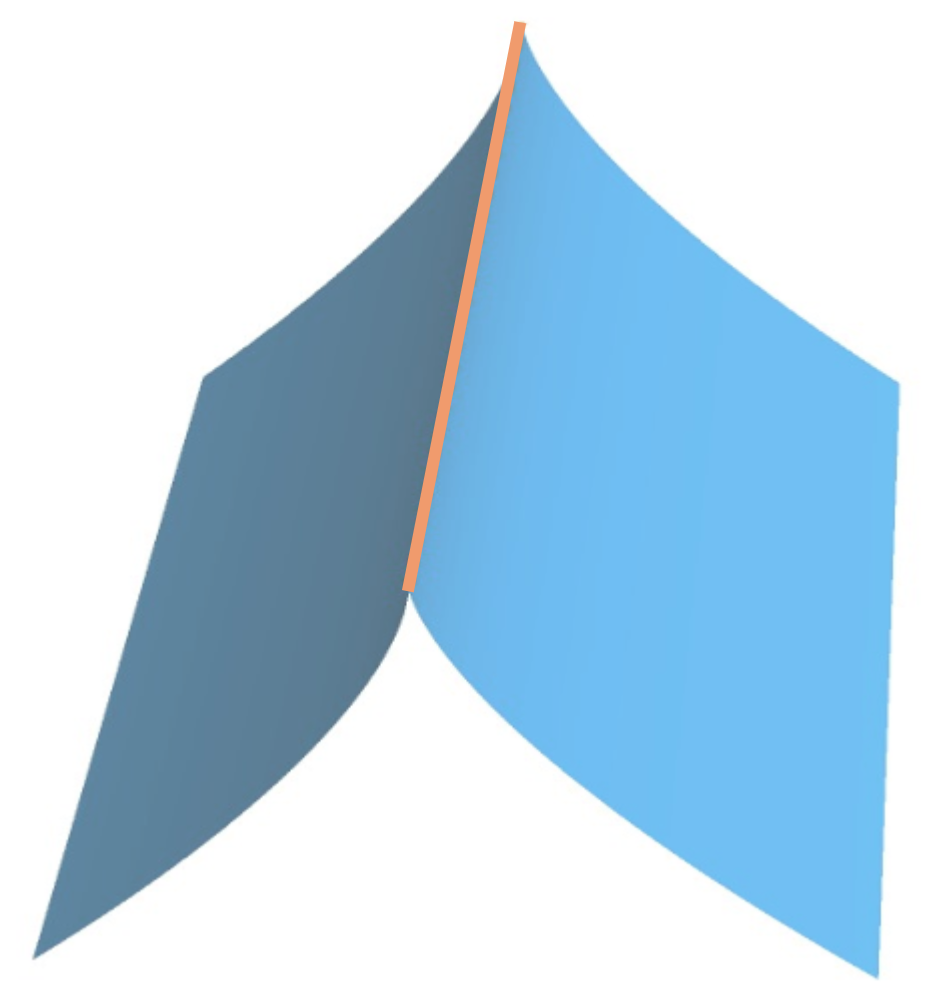}
        \subcaption*{MLP}
    \end{subfigure}
    \hspace{8em}
    \begin{subfigure}[b]{.28\linewidth}
        \centering
        \includegraphics[width=\linewidth]{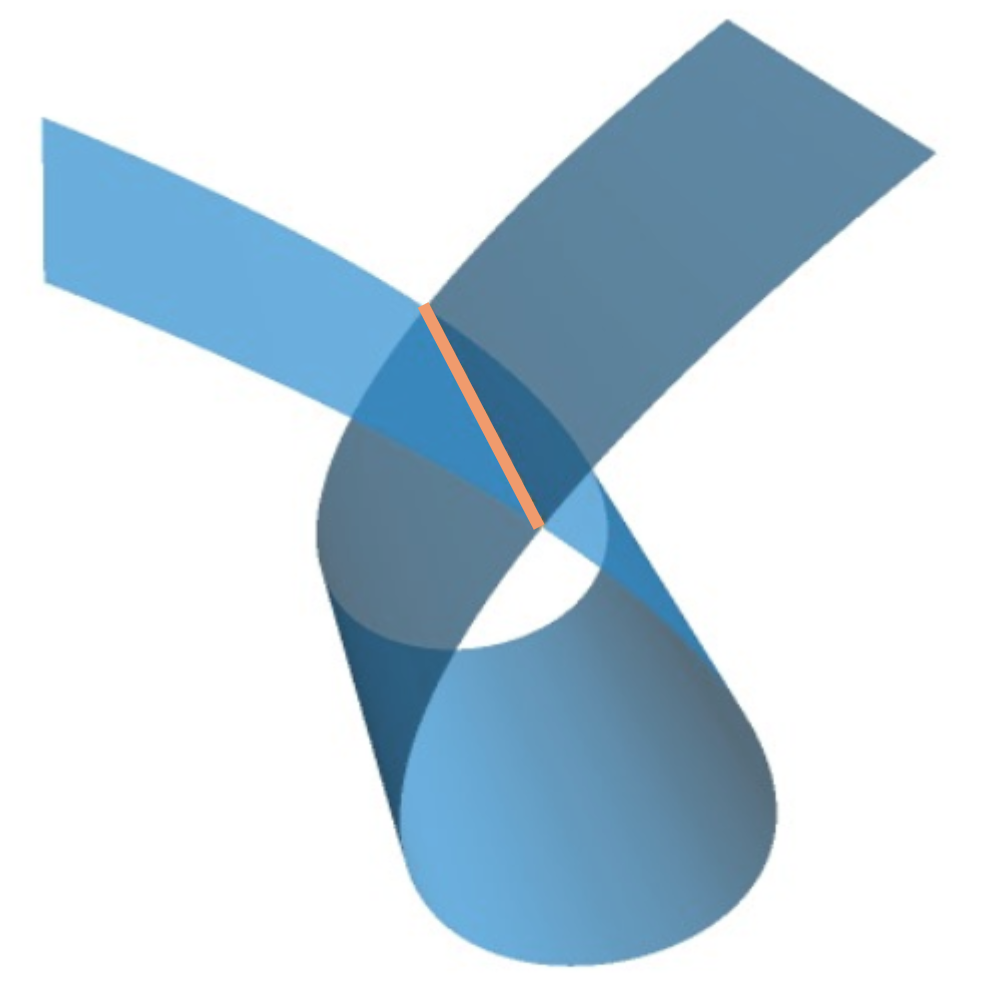}
        \subcaption*{CNN}
    \end{subfigure}
    \caption{Illustration of the different types of singularities (orange) arising in the neuromanifolds of MLPs and CNNs. }
    \label{fig:singtikz}
\end{figure}
\subsection{Comparison with Convolutional Networks}\label{sec:convbias}
So far, we have considered neural networks with a fully-connected architecture. In this section, we instead discuss \emph{convolutional}  networks---a classical architecture originated in computer vision \citep{fukushima1979neural, lecun1995convolutional}. We start by recalling the basic definitions. To this end, fix positive  integers $k, s, d' \in \mathbb{N}$ representing filter size, stride, and output dimension respectively. The convolution between a filter $w \in \mathbb{R}^k$ and an input vector $x \in \mathbb{R}^d$, with $d = s(d' -  1) + k$, is the vector $ w \conv{s} x \in \mathbb{R}^{d'}$ defined for $ 1 \leq i \leq d'$ as: 
\begin{equation}
\label{eq:conv_star}
 (w \conv{s} x)[i] =   \sum_{j=1}^k w[j] \  x[s(i-1) + j].
\end{equation}
Note that we consider one-dimensional convolutions for simplicity of notation, but the theory holds similarly for higher-dimensional ones. Indeed, our arguments build upon \cite{shahverdi2024geometryoptimizationpolynomialconvolutional}, whose results apply verbatim, with the same proofs, to higher-dimensional convolutions.

Now, fix sequences $\mathbf{k}, \mathbf{s} \in \mathbb{Z}_{\geq 1}^{L}$, $\mathbf{d} \in \mathbb{Z}_{\geq 1}^{L+1}$ such that $d_i = s_i(d_{i+1} + k_i)$ for all $i$, and vectors $\mathbf{w} = (w_1, \ldots, w_L) \in \bigoplus_{i=1}^L\mathbb{R}^{k_i}$. 
\begin{defn}
A \emph{Convolutional Neural Network} (CNN) with architecture $(\mathbf{k}, \mathbf{s}, \mathbf{d})$ and weights $\mathbf{w}$ is the map $f_\mathbf{w} \colon \mathbb{R}^{d_0} \rightarrow \mathbb{R}^{d_L}$ given by:
\begin{equation}
f_\mathbf{w}(x) = w_{L} \conv{s_{L}} \sigma \left(\cdots \conv{s_2}  \sigma ( w_1 \conv{s_1} x   ) \right),
\end{equation}
where $\sigma$ is applied coordinate-wise.
\end{defn}
Similarly to MLPs, via abuse of notation, we denote by $\mathcal{W} = \bigoplus_{i=1}^L\mathbb{R}^{k_i}$ the parameter space, by $\varphi \colon \mathcal{W} \ni \mathbf{w} \mapsto f_\mathbf{w}$ the parametrization map, and its image---i.e., the neuromanifold---by $\mfldconv$. 

The geometry of the neuromanifold is well understood for monomial activation functions $\sigma(x)=x^r$  \citep{shahverdi2024geometryoptimizationpolynomialconvolutional, kohn2022geometry}. Below, we extend the main results from \cite{shahverdi2024geometryoptimizationpolynomialconvolutional} to  general polynomial activations with $\sigma(0)=0$. Specifically, we show that $\varphi$ is regular in $\mathcal{W} \setminus \varphi^{-1}(0)$, meaning that its Jacobian $\jacob_\mathbf{w}\varphi$ has full rank for every $\mathbf{w} \in \mathcal{W} \setminus \varphi^{-1}(0)$. The reason we exclude the fiber of the zero function is that, similarly to MLPs, the Jacobian is rank deficient at $\mathbf{w}$ if $f_\mathbf{w} = 0$ (see Remark \ref{rem:vanexp}). 
\begin{thm}
\label{thm:cnn_regular_app}
Let $\sigma$ be a generic polynomial of large enough degree $r \gg 0$ (depending on $L$) with $\sigma(0)=0$. Then the parametrization map $\varphi$ restricted to $\mathcal{W} \setminus \varphi^{-1}(0)$ is regular, generically one-to-one, and its remaining fibers are finite.  In particular, 
$\textnormal{dim}(\mfldconv) = \textnormal{dim}(\mathcal{W}) = \sum_{i=1}^L k_i$. 
\end{thm}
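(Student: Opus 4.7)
The plan is to mirror the structure of the proof of Theorem~\ref{thm:dimmainbody}, since both claims are about the generic rank and fiber of a parametrization map. First I would observe that the three assertions---$\jacob_{\mathbf{w}} \varphi$ having maximal rank on $\mathcal W \setminus \varphi^{-1}(0)$, the generic fiber being a single point, and every remaining fiber in $\mathcal W \setminus \varphi^{-1}(0)$ being finite---are each Zariski-open in the coefficient space of $\sigma$. Hence it suffices to exhibit one polynomial $\sigma$ of the prescribed degree $r$ with $\sigma(0)=0$ for which they hold; the claim for a \emph{generic} $\sigma$ will follow from Zariski density.

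For the explicit $\sigma$, I would take a sparse polynomial
\begin{equation}
    \sigma(x) = \sum_{i=1}^L x^{\beta_i}
\end{equation}
with $\beta_1 \geq 2$ (so that $\sigma(0)=0$) and with gaps $\beta_i > \beta_{i-1}^{L-1}$, analogously to \eqref{eq:sparse_activ}. Using the same decomposition argument as in the proof of Theorem~\ref{thm:dimmainbody} (Lemma~\ref{lemm:sparse_activation}), the CNN output $f_{\mathbf{w}}$ decomposes uniquely as a sum of pure-monomial CNN outputs $f_{\mathbf{w}}^{(i)} = w_L \conv{s_L} \sigma_{\beta_i} \cdots \sigma_{\beta_i}(w_1 \conv{s_1} x)$ plus cross terms that do not interfere. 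Consequently, any $\mathbf{w}' \in \varphi^{-1}(f_{\mathbf{w}})$ must simultaneously lie in the fiber of each monomial-activation CNN parametrization. For monomial activation $x^{\beta_i}$, the main result of \cite{shahverdi2024geometryoptimizationpolynomialconvolutional} yields both regularity of the parametrization away from the zero-fiber, and a description of the generic fiber as scalar rescalings $\mathbf{w}' = (\lambda_{i,1} w_1, \ldots, \lambda_{i,L} w_L)$ constrained by
\begin{equation}
    \lambda_{i,L}\, \prod_{j=1}^{L-1} \lambda_{i,j}^{\beta_i^{\,L-j}} = 1.
\end{equation}

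In the last step, I would intersect these fiber descriptions across $i = 1, \ldots, L$. Since the same $\mathbf{w}'$ must arise from every monomial constraint, the scalars must agree, i.e., $\lambda_{i,j} = \lambda_j$ independently of $i$, producing a system of $L$ binomial equations in $L$ unknowns whose exponent matrix $A[i,j] = \beta_i^{L-j}$ is Vandermonde with nonzero determinant. Applying the Smith normal form / toric lattice-ideal argument exactly as in \eqref{eq:vandermonde_finite_sol} shows that this system has finitely many complex solutions, giving finiteness of the fibers on all of $\mathcal W \setminus \varphi^{-1}(0)$. Finiteness of generic fibers combined with the maximal-rank property of the Jacobian then yields $\dim(\mfldconv) = \dim(\mathcal W) = \sum_i k_i$ by the fiber-dimension theorem. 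For the stronger claim of generic one-to-oneness, I would further observe that over $\mathbb{R}$, with all $\beta_i$ chosen odd, the sign of $\lambda_j^{\beta_i^{L-j}}$ coincides with the sign of $\lambda_j$; pairing this with the positivity structure of generic real weights $w_j$ forces $\lambda_j = 1$.

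The main obstacle is the last sharpening from finitely many complex fiber points to a single real one. The toric/Vandermonde argument produces on the order of $\ell_1 \cdots \ell_{L-1}$ complex solutions, and cutting this down to the unique tuple $(1,\ldots,1)$ over $\mathbb{R}$ requires either a careful sign/parity argument via the odd exponents or an appeal to the explicit real-fiber analysis in \cite{shahverdi2024geometryoptimizationpolynomialconvolutional}, which is the delicate step that separates the CNN result from the merely-finite-fiber conclusion of Theorem~\ref{thm:dimmainbody}.
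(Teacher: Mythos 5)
There is a genuine gap in your argument, and it is located exactly where the theorem's strongest claim lives: the assertion that the differential $\jacob_{\mathbf{w}}\varphi$ has full rank $\dim(\mathcal{W})=\sum_i k_i$ at \emph{every} point of $\mathcal{W}\setminus\varphi^{-1}(0)$. Your proposal intersects the \emph{fiber} descriptions of the monomial CNNs and runs the Smith-normal-form count on the resulting binomial system; this controls the cardinality of fibers, but a map can have finite (even singleton) fibers and still have a degenerate Jacobian at some point (think of $x\mapsto x^3$ at the origin). Moreover, the ``regularity'' that the main result of \cite{shahverdi2024geometryoptimizationpolynomialconvolutional} gives for a \emph{monomial} activation is regularity onto a neuromanifold of dimension $\sum_i k_i - (L-1)$: the monomial Jacobian at each $\mathbf{w}$ has an explicit $(L-1)$-dimensional kernel coming from the per-layer rescaling symmetry $(w_1,\dots,w_L)\mapsto(\lambda w_1, w_2\lambda^{-\beta_i},\dots)$. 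So you cannot import ``maximal rank'' from the monomial case; you must show that these nontrivial kernels, one for each monomial summand $\sigma_{\beta_i}$, intersect in $\{0\}$. The paper's proof does precisely this: using the decomposition from Lemma~\ref{lemm:sparse_activation}, the kernel of the sparse-polynomial Jacobian is contained in $\bigcap_i \mathcal{A}_i$, where $\mathcal{A}_i$ is the explicit $(L-1)$-dimensional kernel from \cite[Proposition~A.2]{shahverdi2024geometryoptimizationpolynomialconvolutional}, and a \emph{linear} Vandermonde system (not the multiplicative binomial one) forces $\bigcap_i\mathcal{A}_i = \{0\}$. This is the step your proposal is missing, and the Smith-normal-form argument cannot substitute for it, since it lives at the level of fibers, not tangent spaces.

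Your treatment of generic one-to-oneness is also on a different track from the paper's. You propose a sign/parity argument using odd exponents $\beta_i$ to cut the finitely many complex fiber points down to a single real one, and you correctly flag this as the delicate step; as written it is not complete (oddness alone only controls signs of individual $\lambda_j^{\beta_i^{L-j}}$, not of products, so positivity of all $\lambda_j$ does not follow directly, and passing from a positive solution set to the unique solution $\lambda_j=1$ would need a log-linear Vandermonde argument that you do not spell out). The paper avoids this by a more concrete route: it evaluates the fiber at the all-ones filter tuple, invokes \cite[Theorem~4.6]{shahverdi2024geometryoptimizationpolynomialconvolutional} to restrict that fiber to layerwise scalings, and then applies a dedicated rigidity lemma (Lemma~\ref{lem:rigidity} via Corollary~\ref{cor:scaledFilters}) that analyzes the composite univariate polynomial $\sigma(\lambda_{L-1}u_{L-1}\sigma(\cdots\sigma(\lambda_1 u_1 x)))$ by an asymptotic expansion of $x\sigma'/\sigma$, forcing all $\lambda_j=1$ under the hypotheses $a_r, a_{r-1}\neq 0$. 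Having a genuine singleton fiber at one regular point, together with the just-established everywhere-regularity, then upgrades by a standard openness argument to generic one-to-oneness. In short: you have correctly mirrored the fiber-finiteness mechanism of Theorem~\ref{thm:dimmainbody}, but the CNN theorem claims two stronger things---pointwise regularity and generic injectivity---and both require arguments (kernel intersection and the rigidity lemma) that do not appear in your proposal.
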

The proof is provided in  Appendix  \ref{sec:proofcnnthm}. Theorem \ref{thm:cnn_regular_app} establishes a stronger property than the one satisfied by the parametrization of an MLP (cf. Theorem \ref{thm:dimmainbody}). As such, it has important consequences in terms of singularities and exposedness. 
Specifically, since the parametrization $\varphi$ is regular, it does not induce spurious critical points of the loss in parameter space \citep{trager2019pure}, i.e., all critical parameters $\mathbf{w}$ (away from $\varphi^{-1}(0)$) yield critical functions $f_{\mathbf{w}}$.
Since moreover all fibers of $\varphi$ (excluding $0$) are finite, we conclude that the singular points of $\mfldconv \setminus \{ 0 \}$ are \emph{nodal}, i.e., they arise from self-intersections. Differently from the case of MLPs, these singularities are of mild type---see Figure \ref{fig:singtikz} for an illustration, and do not cause critically exposedness.
In fact, we now show that, once the fiber of $0$ has been excluded, no algebraic set can be critically exposed. This paints a completely different picture than in the case of MLPs (cf. Theorem \ref{thm:exposed_new}).
\begin{prop}\label{prop:cnnexp}
Assume the hypotheses of Theorem~\ref{thm:cnn_regular_app}.  Let $S \subset \mathcal{W}$ be an open set of an algebraic variety strictly contained in $\mathcal{W}$. If $0 \not \in \varphi(S)$, then $S$ is not critically exposed.   
\end{prop}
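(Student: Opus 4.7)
The plan is to derive a dimension count for $U_S$ via the parametric formula \eqref{eq:affamily}, and exploit the regularity of $\varphi$ on $\mathcal{W} \setminus \varphi^{-1}(0)$ provided by Theorem~\ref{thm:cnn_regular_app} to force the count to be deficient. The contrast with the MLP case (Theorem~\ref{thm:exposed_new}) will be visible in the dimension bookkeeping: for MLPs, the normal-bundle-like union sweeps out all of $\mathcal{V}$, whereas here it cannot.

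First, I would observe that the hypothesis $0 \notin \varphi(S)$ entails $S \subseteq \mathcal{W} \setminus \varphi^{-1}(0)$. By Theorem~\ref{thm:cnn_regular_app}, the Jacobian $\jacob_{\mathbf{w}} \varphi$ has full rank $\dim(\mathcal{W})$ at every $\mathbf{w} \in S$. Consequently, $\mathrm{im}(\jacob_{\mathbf{w}} \varphi)^\perp$ has constant dimension $\dim(\mathcal{V}) - \dim(\mathcal{W})$ throughout $S$, and these orthogonal complements assemble into an algebraic vector subbundle $N \to S$ of rank $\dim(\mathcal{V}) - \dim(\mathcal{W})$ (constant rank guarantees that the kernels of the transposed Jacobians vary algebraically).

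Next, I would interpret \eqref{eq:affamily} as saying that $U_S$ is the image of the morphism
\begin{equation}
\Psi \colon N \longrightarrow \mathcal{V}, \qquad (\mathbf{w}, n) \longmapsto f_{\mathbf{w}} + n.
\end{equation}
Since $S$ is an open subset of a proper algebraic subvariety of $\mathcal{W}$, we have $\dim(S) < \dim(\mathcal{W})$, and hence
\begin{equation}
\dim(N) \;=\; \dim(S) + \dim(\mathcal{V}) - \dim(\mathcal{W}) \;<\; \dim(\mathcal{V}).
\end{equation}
The image of a morphism of varieties has dimension at most that of its source, so $\dim(U_S) < \dim(\mathcal{V})$. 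Because $\mathcal{V}$ is irreducible, the Zariski closure of $U_S$ is a proper subvariety of $\mathcal{V}$, so $U_S$ has empty Zariski interior, and $S$ is not critically exposed.

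The main delicate point is not the inequality itself, which reduces to an elementary dimension count, but rather the initial reduction: ensuring that $\mathrm{im}(\jacob_{\mathbf{w}} \varphi)^\perp$ really organizes into an algebraic family over $S$ so that $U_S$ is the image of a single morphism from a variety of the expected dimension. This is exactly where the full-rank conclusion of Theorem~\ref{thm:cnn_regular_app} is indispensable: without it, the perpendicular spaces would jump in dimension along $\varphi^{-1}(0)$ (cf.\ Remark~\ref{rem:vanexp}), reintroducing critically exposed loci precisely as happens for MLPs.
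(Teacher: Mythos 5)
Your proposal is correct and follows essentially the same dimension-counting strategy as the paper: both bound $\dim(U_S)$ via the parametric family in \eqref{eq:affamily} and conclude it is strictly less than $\dim(\mathcal{V})$. The one packaging difference is worth noting: you index the family of normal affine spaces directly by $S$ (building a bundle $N\to S$ and mapping it into $\mathcal{V}$), so you only invoke the full-rank (regularity) conclusion of Theorem~\ref{thm:cnn_regular_app} to get a constant-rank, hence algebraic, family of orthogonal complements. The paper instead reindexes the family by $\varphi(S)$ and therefore also invokes the finite-fiber conclusion of Theorem~\ref{thm:cnn_regular_app} twice, once to bound the number of subspaces lying over a given point of $\varphi(S)$ and once to conclude $\dim(\varphi(S)) < \dim(\mathcal{M}_{\mathbf{k},\mathbf{s},\mathbf{d},\sigma})$. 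Since $\varphi$ has finite fibers the two counts agree ($\dim(\varphi(S)) = \dim(S)$ and $\dim(\mathcal{M}_{\mathbf{k},\mathbf{s},\mathbf{d},\sigma}) = \dim(\mathcal{W})$), so the bounds are numerically identical; your version is marginally cleaner in that it isolates which part of Theorem~\ref{thm:cnn_regular_app} is truly indispensable, namely regularity rather than finiteness of fibers.
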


The proof is provided in  Appendix  \ref{sec:proofcnnexp}. The fact that, differently from MLPs, subnetworks do not define equilibria for the training dynamics of CNNs has been sometimes observed empirically. For example, it has been established that initializing CNNs with almost all vanishing weights does not result in a collapsed dynamics. Instead, the network recovers from the initialization, and learns well \citep{blumenfeld2020beyond}.

Nevertheless, subnetworks of CNNs can yield singular points of the neuromanifold. In fact, we show now that \emph{all} singularities arise from subnetworks in the case of convolutional architectures. Since CNNs exhibit a weight-sharing pattern, the notion of subnetwork is different than for MLPs. Specifically, we wish subnetworks to be reparametrizable by a smaller architecture, i.e., by smaller filters. We will think of the latter as being represented by filters that are padded by vanishing entries on the left or on the right. This leads to the following definition. Pick integers $A_1, \ldots, A_{L}$ such that $0 \leq A_i \leq k_i$ for every $i$, and define $\mathbf{A} = (A_1, \ldots, A_L)$. 
\begin{defn}\label{def:subnetCNN}
A parameter $\mathbf{w} \in \mathcal{W}$ is an $\mathbf{A}$-subnetwork if for each $i=1, \ldots, L$,  $w_i[j] = 0$ holds either for all $j = 1, \ldots,  A_i$ or for all $j = k_i - A_i+1, \ldots, k_i$. The subnetwork is proper if $A_i > 0$ for at least one $i$.
\end{defn}
Given an $\mathbf{A}$-subnetwork, we denote by $t_i \in \mathbb{Z}$ the cardinality of $A_i$, equipped with a positive or negative sign corresponding to whether $w_i$ is padded with zeros on the left or on the right, respectively. We also recursively define $\tilde{t}_{0} = 0$, $\tilde{t}_i = t_i + \tilde{t}_{i-1} / s_{i-1}$ for $i \geq 1$.  The proof of the following claim is  provided in  Appendix  \ref{sec:proofsingcnn}, alongside an example illustrating the condition on the $\tilde t_i$ in Appendix~\ref{app:cnn}.
\begin{thm}\label{thm:singcnn}
Assume the hypotheses of Theorem~\ref{thm:cnn_regular_app}. Let $\mathbf{w} \in \mathcal{W} \setminus \varphi^{-1}(0)$. Then $f_\mathbf{w}$ is a singular point of $\mfldconv$ if, and only if, $\mathbf{w}$ is a proper $\mathbf{A}$-subnetwork 
such that $\tilde{t}_i \in \mathbb{Z}$ for all $i$, and $\tilde{t}_{L} = 0$. 
\end{thm}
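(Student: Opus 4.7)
The plan is to use Theorem \ref{thm:cnn_regular_app} as the pivot: since $\varphi$ is regular on $\mathcal{W}\setminus\varphi^{-1}(0)$ with full-rank Jacobian and only finite fibers, the singular locus of $\mfldconv$ away from $0$ coincides exactly with the self-intersection locus, i.e., with the image of the set of parameters $\mathbf{w}$ admitting some $\mathbf{w}'\neq\mathbf{w}$ with $f_{\mathbf{w}'}=f_\mathbf{w}$. Thus the theorem reduces to characterizing non-injectivity of $\varphi$ outside the zero fiber.

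For the $(\Leftarrow)$ direction I would, given a proper $\mathbf{A}$-subnetwork $\mathbf{w}$ with $\tilde t_i \in \mathbb{Z}$ for all $i$ and $\tilde t_L = 0$, construct a distinct preimage $\mathbf{w}'\neq\mathbf{w}$ by translating each filter $w_i$ within its zero-padded slot by an integer amount dictated by $\tilde t_i$. The zero padding on one side of $w_i$ makes this shift a genuine parameter change; such a shift at layer $i-1$ produces a spatial translation of the intermediate signal that, after the stride-$s_{i-1}$ convolution of the next layer, corresponds to $\tilde t_{i-1}/s_{i-1}$ output slots---an integer precisely when $\tilde t_i \in \mathbb{Z}$. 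The accumulated shift at the output is then $\tilde t_L = 0$, so $f_{\mathbf{w}'}=f_\mathbf{w}$, exhibiting the self-intersection and hence the singularity.

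For the $(\Rightarrow)$ direction, assume $f_\mathbf{w}$ is singular with $f_\mathbf{w}\neq 0$, so by the reduction above there is $\mathbf{w}'\neq\mathbf{w}$ with $f_{\mathbf{w}'}=f_\mathbf{w}$. Applying the sparse-activation expansion of Lemma \ref{lemm:sparse_activation} already used in Theorems \ref{thm:dimmainbody} and \ref{thm:cnn_regular_app}, I would decompose $f_\mathbf{w}$ as a sum of monomial CNN terms of separated degrees and match it coefficient-by-coefficient against the same decomposition of $f_{\mathbf{w}'}$. Each monomial piece is a monomial CNN, so the fiber description from \cite{shahverdi2024geometryoptimizationpolynomialconvolutional} applies and shows that within each piece the ambiguity reduces to (a) shifts of filters within zero-padded slots and (b) scalar rescalings across layers. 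The rescalings must then act consistently on all pieces simultaneously; since the degrees are chosen so that no nontrivial scaling can be common to different monomial terms, only shift symmetries survive. Encoding the shift at layer $i$ by a signed integer $t_i$ and propagating it through the strides yields precisely the recursion $\tilde t_i = t_i + \tilde t_{i-1}/s_{i-1}$, the integrality $\tilde t_i \in \mathbb{Z}$ (so that each intermediate shift is realizable), the subnetwork condition (the shift must occur inside a genuine zero block of $w_i$, with the padding side determined by $\mathrm{sign}(t_i)$), and $\tilde t_L=0$ (the outputs coincide).

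The main obstacle is the forward direction: proving that for a generic polynomial $\sigma$ with $\sigma(0)=0$ the only symmetries of the CNN parametrization away from $\varphi^{-1}(0)$ are filter shifts. This means ruling out the layer-wise multiplicative rescalings present in the monomial case and checking that what remains is exactly encoded by the integrality and vanishing of the $\tilde t_i$. The sparse-activation trick separates the monomial components, but the bookkeeping---tracking which side of each filter the padding sits on, how shifts compose with strides, and how the per-piece monomial fibers intersect to kill the rescaling component---is the delicate step.
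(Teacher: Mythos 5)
Your overall reduction and your backward direction match the paper's proof. You correctly use Theorem~\ref{thm:cnn_regular_app} to reduce singularity of $f_\mathbf{w}$ (away from $\varphi^{-1}(0)$) to the fiber $\varphi^{-1}(f_\mathbf{w})$ being non-singleton, and the explicit shift construction for $(\Leftarrow)$ is exactly what the paper does.

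Your forward direction, however, takes a different route that leaves a real gap. You propose applying the sparse-activation decomposition of Lemma~\ref{lemm:sparse_activation} to separate $f_\mathbf{w}$ into monomial CNN pieces of disjoint degrees, intersecting the resulting per-piece fibers, and killing the layerwise rescalings by a Vandermonde-type argument across pieces (analogous to Lemma~\ref{lem:diag_fiber_scalar}). This requires the \emph{specific} sparse activation $\sigma(x)=\sum x^{\beta_i}$, whereas the theorem is stated for a \emph{generic} $\sigma$ with $\sigma(0)=0$. For Theorem~\ref{thm:cnn_regular_app} the sparse-to-generic transfer is legitimate because regularity and fiber finiteness are Zariski-open in the coefficients of $\sigma$; but the biconditional singularity characterization in Theorem~\ref{thm:singcnn} is not an open condition, and you do not explain how to propagate the fiber description from the sparse $\sigma$ to a generic one. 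The paper avoids this issue entirely: it invokes Corollary~\ref{cor:scaledFilters}---a rigidity result proved via the asymptotic expansion argument of Lemma~\ref{lem:rigidity}---which rules out nontrivial layerwise rescalings for \emph{any} $\sigma$ with $a_r,a_{r-1}\neq 0$ and $a_0=0$, hence for the generic $\sigma$ directly. Having eliminated rescalings, the paper then needs only the single top-degree (degree-$r^{L-1}$) homogeneous component of $f_\mathbf{w}=f_{\mathbf{w}'}$, which is the monomial CNN with activation $x^r$, and cites \cite[Theorem~4.6, Remark~4.2]{shahverdi2024geometryoptimizationpolynomialconvolutional} for the shift/subnetwork characterization with the integrality conditions on $\tilde t_i$---no multi-piece intersection is needed. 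So the two missing ingredients in your sketch are precisely Corollary~\ref{cor:scaledFilters} (or a substitute rigidity statement valid for generic $\sigma$) and the observation that the top-degree part alone already reduces the problem to the monomial CNN case.
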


\section{Conclusions, Limitations, and Future Work}\label{sec:conclim}
We have considered neural networks with an activation function that is a sufficiently-generic polynomial. By leveraging  arguments from algebraic geometry, we have studied questions related to identifiability, singularity, and exposedness for MLPs and CNNs. Overall, this work addresses open problems in neuroalgebraic geometry \citep{marchetti2024harmonics}, and provides a novel geometric perspective on the sparsity bias of neural networks. Yet, it is subject to limitations, as outlined below. 
    
\paragraph{Complete characterization of singularities and exposedness.} Theorems \ref{thm:singular}  states that subnetworks of MLPs can parametrize singular points of the neuromanifold. However, it is unclear whether this describes all singularities. For deep linear MLPs \citep{trager2019pure} and for deep polynomial CNNs (see Theorem \ref{thm:singcnn}), all singularities come from subnetworks. The problem of a full characterization of the singularities of polynomial MLPs is left open.
Similarly, a complete classification of critically exposed parameter subsets for polynomial MLPs, beyond the subnetworks described in Theorem \ref{thm:exposed_new} and Remark \ref{rem:exposedSubnetworkRepeated},  is crucial for a complete understanding of the biases of deep networks. 

\paragraph{Type of critical points.} Our analysis has only considered whether weights are critical points of the objective, disregarding the type of criticality (local minimum/maximum, or saddle). Although all critical points are equilibria of the dynamics, local minima correspond to the (local) attractors. Thus, it would be interesting to incorporate the type of critical points in our analysis; in particular, to understand whether subnetworks are local minima for non-negligible amounts of $u \in \mathcal{V}$. 

\paragraph{Beyond the algebraic.} Throughout this work, in order to leverage on tools from algebraic geometry, we have focused on neural networks with polynomial activations. However, several popular activation functions are either piece-wise polynomial (e.g., ReLU), or completely non-algebraic (e.g., Tanh and SoftMax). Extending our results to the non-polynomial case is therefore important. As mentioned in Section \ref{sec:introd}, a promising strategy to this end is polynomial approximation; since polynomials can (locally) approximate arbitrary continuous functions, neuromanifolds of general neural networks can be approximated by algebraic ones. This approach is, generally speaking, part of the research program of neuroalgebraic geometry \citep{marchetti2024harmonics}, and has been often fruitful for extending results for polynomial networks beyond the algebraic domain  \citep{boulle2020rational, zhang2023covering}. 
To this end, we have sketched an argument in Remark \ref{rem:newapprox}. However, a rigorous proof comprising all the functional analysis details is left for future investigation.  


\section*{Acknowledgements}
This work was partially supported by the Wallenberg AI, Autonomous Systems and Software Program (WASP) funded by the Knut and Alice Wallenberg Foundation.

\bibliography{main}
\bibliographystyle{iclr2026_conference}

\newpage
\appendix
\section*{Appendix}

\section{Singularity and Criticality}\label{sec:app_sing}
In this section, we aim to clarify the distinction between singular points of a variety and critical points of 
its parametrization. 
Although both notions involve degeneracies, they can arise from different mechanisms and have 
different implications for optimization.
Singular points are geometric features of the variety itself, reflected in 
the dimension of the tangent space, while critical points depend on the differential of the chosen parametrization. 

To formally define singularities of a variety $X \subseteq \mathbb{R}^n$,  we assume that $X$ is \emph{irreducible} (i.e., it cannot be written as the union of two proper non-empty subvarieties) and  consider its \emph{vanishing ideal} $I_X$ that consists of all polynomials in $\mathbb{R}[x_1,\ldots,x_n]$ that vanish along $X$.
We then fix a generating set $\langle f_1, \ldots, f_s \rangle = I_X$ for that ideal and  compute the $s \times n$ Jacobian matrix $\jacob$  whose $(i,j)$-th entry is $\frac{\partial f_i}{\partial x_j}$.
For almost every point $p$ on $X$ (i.e., except for $p$ on some proper subvariety), the rank of $\jacob(p)$ is the same. 
Those are the \emph{smooth} points of $X$, and at those points $p$, the kernel of $\jacob(p)$ is the tangent space of $X$ at $p$. 
At the remaining points on $X$, the rank of the Jacobian drops; those are the \emph{singular} points.
They form a subvariety of $X$, defined by the ideal $I_X$ and the $c \times c$ minors of the Jacobian, where $c$ is the rank of $\jacob(p)$ at a smooth point $p$ of~$X$.

Through the classical examples of a nodal and a cuspidal cubic curve, we now illustrate how the concepts of singularities on varieties and critical points of parametrizations differ, 
and how tangent spaces play a key role in their characterization.

\begin{figure}[h!]
  \centering
  \begin{subfigure}{0.45\textwidth}
    \centering
    \includegraphics[width=\linewidth]{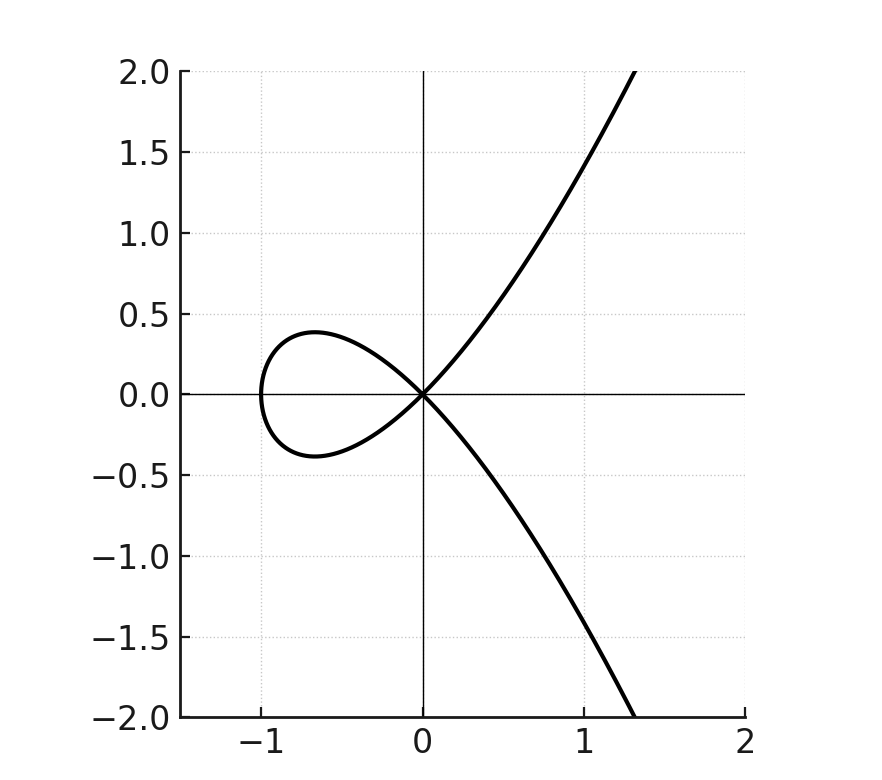}
    \caption{Nodal cubic}
    \label{fig:nodal}
  \end{subfigure}
  \hfill
  \begin{subfigure}{0.45\textwidth}
    \centering
    \includegraphics[width=\linewidth]{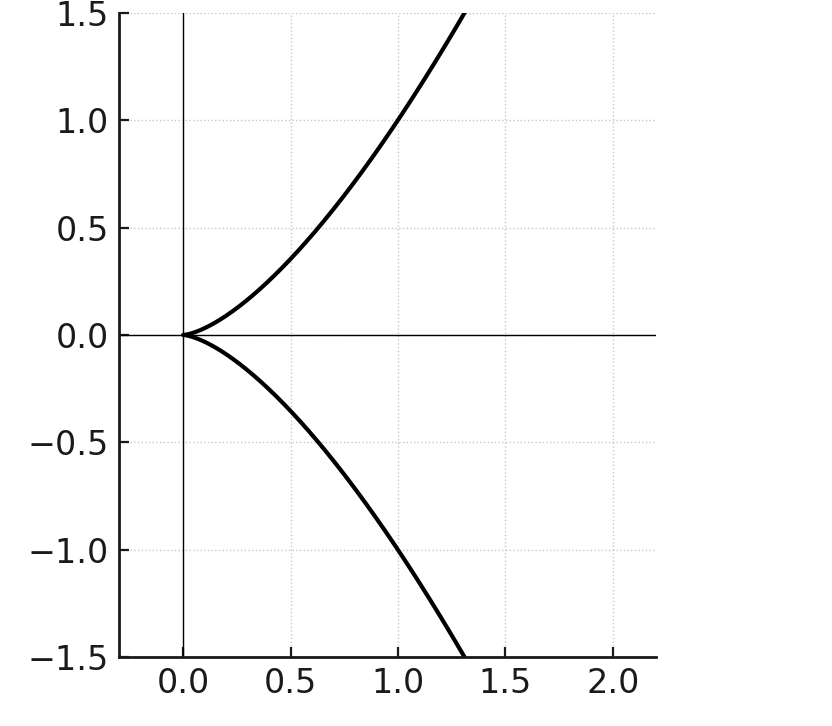}
    \caption{Cuspidal cubic}
    \label{fig:cuspidal}
  \end{subfigure}
  \caption{Two singular cubic curves.}
  \label{fig:node-cusp}
\end{figure}

\begin{exmp}
 Consider the parametrization of two singular cubic curves:
\begin{align}
  \varphi \colon \mathbb{R} &\to \mathbb{R}^2, & 
  t &\mapsto (t^2 - 1,\; t(t^2 - 1)) 
  && \text{(nodal cubic $y^2 = x^2(x+1)$)}, \label{eq:nodal-param} \\[6pt]
  \psi \colon \mathbb{R} &\to \mathbb{R}^2, & 
  t &\mapsto (t^2,\; t^3) 
  && \text{(cuspidal cubic $y^2 = x^3$)}. \label{eq:cuspidal-param}
\end{align}
In both cases, the origin is a singular point of the curve; see Figure~\ref{fig:node-cusp}.
For the nodal cubic, every point on the curve except the origin has a fiber of size one. 
At the origin, however, the fiber is unusual, since it contains two distinct parameters:
$
  \varphi^{-1}(0,0) = \{-1,1\}.
$
In contrast, the cuspidal parametrization is injective, i.e.,  $\psi^{-1}(x,y)$ consists of a single point for every  $(x,y)\in \operatorname{im}(\psi)$.

  The Jacobians of both parametrizations are
\begin{align}
  \jacob_t\varphi &= [ 2t\quad3t^2-1], \label{eq:dphi}\\[6pt]
  \jacob_t\psi &= [2t\quad3t^2]. \label{eq:dpsi}
\end{align}
Thus, $\varphi$ is regular at every $t$ (i.e., the Jacobian never vanishes), while $\psi$ is critical at $t=0$ 
(where the Jacobian is $[0,0]$).
In other words, the cuspidal parametrization is not regular at the cusp, although it does not have any unusual fibers; its singularity arises solely from the drop in rank of the Jacobian.

To see algebraically that the origin is indeed the only singular point of both curves, we compute the Jacobian of their defining equations $f(x,y) =x^2(x+1)-y^2 $ and $g(x,y) = x^3-y^2$, respectively:
\begin{align}
  \jacob f &= [ 3x^2+2x\quad-2y], \\[6pt]
  \jacob g &= [3x^2\quad-2y]. 
\end{align}
Both vanish precisely when  $(x,y)=(0,0)$.
At all other points on the curves, the kernel of the Jabobian is the $1$-dimensional tangent space.
At the singularity $(0,0)$, 
the Zariski tangent space is the whole $2$-dimensional ambient plane $\mathbb{R}^2$. For the node, the plane is spanned by the two distinct tangent 
directions corresponding to $t=-1$ and $t=1$, with slopes $1$ and $-1$, respectively. 
For the cusp, however, 
the $2$-dimensional tangent space arises from the vanishing of all first-order terms, which corresponds 
geometrically to infinite curvature at the origin. In particular, since $\jacob_0\psi = 0$, this tangent space 
cannot be detected solely from the diffential of the parametrization.  
\hfill $\diamondsuit$
\end{exmp}

This example illustrates that critical points of a parametrization can cause singularities on the image variety, but that not all singularities of parametrized varieties arise like that.
Moreover, not every critical point of a parametrization leads to a singular point on the image. This can be for instance seen by projecting the cuspidal curve in Figure~\ref{fig:cuspidal} onto $y$-axis: Composing the cuspidal parametrization $\psi$ with that projection leads to a parametrization $t \mapsto t^3$ of the line $\mathbb{R}^1$ (which is clearly smooth everywhere) that is not regular at $t=0$. 
All in all, critical points of parametrizations and singularities of their image varieties are subtly related, but non implies the other in general.

The effect that different types of singularities have when optimizing a loss over high-dimensional algebraic varieties is largely unknown.
Cuspidal-type singularities, arising from a  rank drop of the parametrization's Jacobian, may appear both as spurious \citep{trager2019pure} and genuine critical points of the loss. 
In the case of MLPs, we show that these types of singularities are critically exposed.
In contrast, nodal-type singularities do not exhibit this 
behavior. 
This reflects the main difference between the singular points of MLPs and CNNs (see 
Figure~\ref{fig:singtikz}) since we show that all singularities of CNNs are of nodal type.

\section{Proof of Theorem \ref{thm:dimmainbody}} \label{app:dimProofMLP}
In this section, we prove Theorem \ref{thm:dimmainbody}. We begin with some technical results. 

\begin{lemm}
\label{lemm:sparse_activation}
Let $\sigma(x) = x^{\beta_L} + \dots + x^{\beta_1}$ be a polynomial activation function such that $\beta_j > \beta_{j-1}^{L-1}$ for every $j > 1$, and $\beta_1 > 1$. Define $\sigma_{\beta_i}(x) = x^{\beta_i}$. Then for all weights $\mathbf{W} \in \mathcal{W}$, the MLP can be decomposed as: 
\begin{equation}
    f_\mathbf{W}(x) = \label{eq:decomp_unique_R(x)_enhanced}
    \sum_{i=1}^L W_L \sigma_{\beta_i} W_{L-1} \sigma_{\beta_i} W_{L-2} \cdots \sigma_{\beta_i} W_1(x) + R(x),
\end{equation}
where the remainder $R(x)$ does not contain any monomial of degree $\beta_j^{L-1}$ for every $j = 1, \dots, L$.
\end{lemm}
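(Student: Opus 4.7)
My approach will be to expand the activation $\sigma = \sum_{i=1}^{L} \sigma_{\beta_i}$ at each of the $L-1$ layers where $\sigma$ is applied in the MLP. Since $\sigma$ acts coordinate-wise, the pointwise vector identity $\sigma(y) = \sigma_{\beta_1}(y) + \cdots + \sigma_{\beta_L}(y)$ holds at every intermediate activation $y$, so I can substitute the sum layer by layer without ever needing to distribute across a composition of nonlinear maps (a common trap). Iterating this substitution will produce
\begin{equation}
f_\mathbf{W}(x) \;=\; \sum_{(i_1, \ldots, i_{L-1}) \in [L]^{L-1}} W_L \, \sigma_{\beta_{i_{L-1}}} \, W_{L-1} \, \sigma_{\beta_{i_{L-2}}} \cdots W_2 \, \sigma_{\beta_{i_1}} \, W_1 (x),
\end{equation}
a sum of $L^{L-1}$ monomial-activation sub-MLPs indexed by the exponent choices at each nonlinearity. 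The $L$ diagonal indices $i_1 = \cdots = i_{L-1} = i$ reproduce the main sum in \eqref{eq:decomp_unique_R(x)_enhanced}, and I will define $R(x)$ to be the sum of the remaining, non-diagonal summands.

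Next, I will observe that each summand is a homogeneous polynomial in $x$ of total degree exactly $\prod_{k=1}^{L-1}\beta_{i_k}$: indeed, $W_1(x)$ is linear, linear maps preserve total degree, and each $\sigma_{\beta_{i_k}}$ multiplies the degree by $\beta_{i_k}$. Consequently, a non-diagonal summand can contribute a monomial of total degree $\beta_j^{L-1}$ only if its entire homogeneous degree equals $\beta_j^{L-1}$. The lemma therefore reduces to the purely arithmetic statement: if $(i_1, \ldots, i_{L-1}) \in [L]^{L-1}$ is not the constant sequence $(j, \ldots, j)$, then $\prod_{k=1}^{L-1}\beta_{i_k} \neq \beta_j^{L-1}$ for every $j$.

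The only nontrivial step is this arithmetic check, which I will carry out via a growth-rate comparison using the hypothesis $\beta_j > \beta_{j-1}^{L-1}$ (which in particular enforces strict monotonicity $\beta_1 < \cdots < \beta_L$ and a super-polynomial gap between consecutive $\beta$'s). Let $i^\star := \max_k i_k$ and suppose $\prod_k \beta_{i_k} = \beta_j^{L-1}$. If $i^\star > j$, then $\beta_{i^\star} \geq \beta_{j+1} > \beta_j^{L-1}$, and since every $\beta_{i_k} > 1$, the product strictly exceeds $\beta_j^{L-1}$. If $i^\star < j$, then all $\beta_{i_k} \leq \beta_{i^\star} < \beta_j$, forcing $\prod_k \beta_{i_k} \leq \beta_{i^\star}^{L-1} < \beta_j^{L-1}$. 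Both cases are contradictions, so $i^\star = j$; then $\prod_k \beta_{i_k} = \beta_j^{L-1}$ combined with each $\beta_{i_k} \leq \beta_j$ forces $\beta_{i_k} = \beta_j$, and by monotonicity $i_k = j$ for all $k$. I do not anticipate any genuine obstacle: the entire content of the lemma is precisely that the exponent-growth hypothesis is strong enough to separate the total degrees of diagonal and off-diagonal summands.
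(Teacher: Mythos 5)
Your proposed decomposition
\begin{equation}
f_\mathbf{W}(x) \;=\; \sum_{(i_1,\ldots,i_{L-1})\in[L]^{L-1}} W_L\,\sigma_{\beta_{i_{L-1}}}\,W_{L-1}\cdots W_2\,\sigma_{\beta_{i_1}}\,W_1(x)
\end{equation}
is false, and this is exactly the trap you claim to sidestep. Expanding the innermost activation gives $\sigma(W_2\sigma(W_1x)) = \sigma\!\bigl(\sum_{i_1} W_2\,\sigma_{\beta_{i_1}}(W_1x)\bigr)$, and the outer $\sigma_{\beta_{i_2}}$ is then applied to a \emph{sum}; since $\sigma_{\beta_{i_2}}(y)=y^{\beta_{i_2}}$ is not additive, the multinomial cross terms cannot be discarded. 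Concretely, take $L=3$, all layer widths $1$, all weights $1$, so $f_\mathbf{W}(x)=\sigma(\sigma(x))=\sum_{j=1}^3(x^{\beta_1}+x^{\beta_2}+x^{\beta_3})^{\beta_j}$. The summand $j=1$ contains, e.g., the cross monomial $x^{(\beta_1-1)\beta_1+\beta_2}$, whose degree $\beta_1^2-\beta_1+\beta_2$ is not of the form $\beta_{i_1}\beta_{i_2}$ (it lies strictly between $\beta_1^2$ and $\beta_1\beta_2$). So your right-hand side is simply not equal to $f_\mathbf{W}$, and the quantity you call $R(x)$ is not the remainder in the lemma statement.

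Because the decomposition fails, the clean homogeneity claim (``each summand has total degree $\prod_k\beta_{i_k}$'') no longer exhausts the monomials of $f_\mathbf{W}$, and your otherwise-correct arithmetic observation that $\prod_k\beta_{i_k}=\beta_j^{L-1}$ forces the constant tuple $(j,\dots,j)$ is not, on its own, a proof. What actually needs to be ruled out is that a multinomial cross term of the form $\sum_m \lambda_m d_m$ (with $\sum_m\lambda_m=\beta_{i_k}$ and $d_m$ ranging over the degrees produced up to layer $k$) lands on $\beta_j^{L-1}$. This is precisely what the paper's inductive argument handles: it expands only the outermost $\sigma$ at each step, maintains an inductive description of \emph{which} degrees can appear at layer $k$ (including the nearest neighbors $\beta_j^{k-1}-\beta_j+\beta_{j-1}$ and $\beta_1^{k-1}-\beta_1+\beta_{j+1}$ of $\beta_j^{k-1}$), and uses the gap hypothesis $\beta_j>\beta_{j-1}^{L-1}$ to show no multinomial combination can hit $\beta_j^k$ except the pure one. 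To salvage your approach you would need to replace the false decomposition with a genuine bookkeeping of multinomial degrees, at which point you essentially recover the paper's induction.
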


\begin{proof}
It suffices to show that monomials of degree $\beta_j^{L-1}$ appear solely in the term
\begin{equation}    
W_L \sigma_{\beta_j} W_{L-1} \sigma_{\beta_j} W_{L-2} \cdots \sigma_{\beta_j} W_1(x).
\end{equation}
We verify this by tracking the degrees generated layer by layer.

First, note that the monomials of degrees $\beta_1^{L-1}$ and $\beta_L^{L-1}$ are clearly only generated by the terms
\begin{equation}
W_L \sigma_{\beta_1} W_{L-1} \sigma_{\beta_1} W_{L-2} \cdots \sigma_{\beta_1} W_1(x) \quad\text{and}\quad
W_L \sigma_{\beta_L} W_{L-1} \sigma_{\beta_L} W_{L-2} \cdots \sigma_{\beta_L} W_1(x),
\end{equation}
respectively. Thus, we focus our analysis on degrees $\beta_j^{L-1}$ for $1 < j < L$.

We proceed by induction on the number of layers $k$, where $2 \leq k \leq L$. Specifically, we claim that at layer $k$, monomials of degree $\beta_j^{k-1}$ appear exclusively through the composition
\begin{equation}
    \label{eq:k_layer_beta_j}
W_k \sigma_{\beta_j} W_{k-1} \sigma_{\beta_j} \cdots \sigma_{\beta_j} W_1(x).
\end{equation}
Moreover, we claim that the closest degrees immediately preceding and succeeding $\beta_j^{k-1}$ are given by $(\beta_j^{k-2}-1)\beta_j + \beta_{j-1}$ and $ (\beta_1^{k-2}-1)\beta_1 + \beta_{j+1}$, respectively.

For $k=2$, note that the degrees appearing at the second layer are 
\begin{equation}    
\beta_1 < \beta_2 < \dots < \beta_{L-1} < \beta_L,
\end{equation}    
and thus $R(x)=0$ at this stage.

For the induction hypothesis, assume that the claim holds for some layer $k < L$. That is, monomials of degrees $\beta_j^{k-1}$ are solely generated by \eqref{eq:k_layer_beta_j}  at layer $k$, and the sequence of degrees generated has the form
\begin{equation}
\label{eq:monomial_deg_layer_k}
\begin{aligned}
\beta_1^{k-1} &< \beta_1^{k-1}-\beta_1+\beta_2 < \cdots < \beta_i^{k-1}-\beta_i+\beta_{i-1} < \beta_i^{k-1} \\
&< \beta_1^{k-1}-\beta_1+\beta_{i+1} < \cdots < \beta_L^{k-1}-\beta_L+\beta_{L-1} < \beta_L^{k-1},
\end{aligned}
\end{equation}
where intermediate terms represent monomials whose degrees lie strictly between the   bounds above.

For the inductive step, consider the $(k+1)$-layer composition, expressed by
\begin{equation}
\label{eq:last_layer_expansion}
W_{k+1}\sigma W_k\sigma \cdots \sigma W_1(x)= \sum_{i=1}^{L} W_{k+1}\sigma_{\beta_i} W_k \sigma \cdots \sigma W_1(x).
\end{equation}
To obtain monomials of degree exactly $\beta_i^k$, we analyze each summand individually. For any choice of $\sigma_{\beta_j}$ with $j \neq i$, we claim that no monomial of degree $\beta_i^k$ emerges. Hence, the only valid choice is $j = i$. To see this, we consider two cases
\begin{itemize}
    \item $j<i:$ In this case, to obtain a term of degree at least $\beta_i^k$ in the multinomial expansion of $\sigma_{\beta_j} W_k \sigma \ldots \sigma W_1(x)$, we have to use at least one the monomials of $W_k \sigma \ldots \sigma W_1(x)$ of degree larger than $\beta_i^{k-1}$. By  \eqref{eq:monomial_deg_layer_k}, the smallest possible degree is $\beta_1^{k-1}-\beta_1+\beta_{i+1}$. However, by our assumption, this is strictly larger than  $\beta_{i}^{L-1} \geq \beta_{i}^{k}$.
    \item $j>i:$ In this case, since the smallest degree appearing in $W_k \sigma \ldots \sigma W_1(x)$ is $\beta_1^{k-1}$ according to \eqref{eq:monomial_deg_layer_k}, the smallest degree after passing through $\sigma_{\beta_j}$ is $\beta_j \beta_{1}^{k-1}>\beta_{j}>\beta_i^{L-1}$. Hence,  all terms are of degree larger than $\beta_i^k$.
\end{itemize}

Moreover, the neighboring degrees of $\beta_i^k$ in $\sigma_{\beta_i} W_k \sigma \cdots \sigma W_1(x)$ are
\begin{equation}    
\underbrace{\beta_i^{k-1}(\beta_i - 1) + (\beta_i^{k-1} - \beta_i + \beta_{i-1})}_{= \beta_i^k - \beta_i + \beta_{i-1}}
< \beta_i^k <
\underbrace{\beta_1^{k-1}(\beta_1 - 1) + (\beta_1^{k-1} - \beta_1 + \beta_{i+1})}_{= \beta_1^k - \beta_1 + \beta_{i+1}}.
\end{equation}    
This exactly confirms the inductive step, hence the uniqueness of the decomposition in \eqref{eq:decomp_unique_R(x)_enhanced}.
\end{proof}

\begin{lemm}
\label{lem:diag_fiber_scalar}
Let $e_1>e_2$ be positive integers of opposite parity.  
For $i=1,\ldots,L$, let  $W_i\in\mathbb R^{d_i\times d_{i-1}}$ be generic. For $j=1,2$, define  
\begin{equation}
A_j:=\left\{
\bigl(D_{j,1}W_1,\,
D_{j,2}W_2D_{j,1}^{-e_j},\,
\dots,\,
W_LD_{j,L-1}^{-e_j}\bigr)
\mid 
D_{j,k}\in\operatorname{diag}_{d_k}^{\times}(\mathbb R)\  \forall k=1,\ldots, L-1
\right\},
\end{equation}
where $\operatorname{diag}_{d_k}^{\times}(\mathbb R)$ is the set of real invertible diagonal matrices of size $d_k\times d_k$.
Then $A_1\cap A_2$ consists exactly of those tuples in $A_1$ for which each $D_{1,k}=\lambda_{1,k}I$ is a multiple of the identity for some $\lambda_{1,k} \in \mathbb{R} \setminus \{ 0 \}$ that satisfy 
\begin{equation}
\lambda_{1,1}^{e_2^{L-2}}
\lambda_{1,2}^{ e_2^{L-3} }
\cdots
\lambda_{1,L-2}^{e_2} \lambda_{1,L-1}=1
\end{equation}
\end{lemm}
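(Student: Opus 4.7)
The plan is to equate the two representations of a tuple in $A_1\cap A_2$ layer by layer, exploit the genericity of each $W_i$, and then use the oddness of $e_1-e_2$ decisively. At layer $1$ the matching reduces to $D_{1,1}W_1 = D_{2,1}W_1$, so generic $W_1$ (whose entries are all nonzero) forces $D_{1,1}=D_{2,1}$; symmetrically, at layer $L$ the equation $W_L D_{1,L-1}^{-e_1}=W_L D_{2,L-1}^{-e_2}$ forces $D_{1,L-1}^{e_1}=D_{2,L-1}^{e_2}$. For each intermediate index $k = 2, \ldots, L-1$, the matching rearranges to $D_{2,k}^{-1}D_{1,k}\cdot W_k = W_k\cdot D_{1,k-1}^{e_1}D_{2,k-1}^{-e_2}$, and the elementary fact that $DW=WD'$ with $D, D'$ diagonal and $W$ generic (all entries nonzero) forces $D=D'=\mu I$ for a common $\mu\in\mathbb{R}^\ast$ then supplies, for each such $k$, a scalar $\mu_k$ with $D_{1,k}=\mu_k D_{2,k}$ and $D_{1,k-1}^{e_1}D_{2,k-1}^{-e_2}=\mu_k I$.

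The next step is to show inductively that every $D_{j,k}$ is a scalar multiple of the identity. The base case combines $D_{1,1}=D_{2,1}$ with the layer-$2$ relation to obtain $D_{1,1}^{e_1-e_2}=\mu_2 I$; since $e_1-e_2$ is odd, the real equation $x^{e_1-e_2}=\mu_2$ admits a unique root, so all diagonal entries of $D_{1,1}$ coincide. Inductively, once $D_{1,k-1}=\mu_{k-1}D_{2,k-1}$ is known to be scalar, substituting into the layer-$(k+1)$ constraint gives $D_{2,k}^{e_1-e_2}=cI$ for some scalar $c$, and odd-parity again forces $D_{2,k}$, hence $D_{1,k}$, to be scalar. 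At the last step the terminal relation $D_{1,L-1}^{e_1}=D_{2,L-1}^{e_2}$ together with $D_{1,L-1}=\mu_{L-1}D_{2,L-1}$ yields $D_{2,L-1}^{e_1-e_2}=\mu_{L-1}^{-e_1}I$, closing the induction. The odd-parity hypothesis is essential here: without it, diagonal matrices whose entries differ by nontrivial real $(e_1-e_2)$-th roots of unity would also satisfy the equations, invalidating the conclusion.

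Finally, I would unravel the recursion to extract the explicit constraint. Writing $a_k := \lambda_{1,k}$ and $b_k := \lambda_{2,k}$, the relations $b_1 = a_1$ together with $a_k/b_k = \mu_k = a_{k-1}^{e_1}/b_{k-1}^{e_2}$ give $b_k = a_k b_{k-1}^{e_2}/a_{k-1}^{e_1}$ for $k \geq 2$, and a short induction on $k$ yields the closed form $b_k = a_k \prod_{j=1}^{k-1}a_j^{e_2^{k-1-j}(e_2-e_1)}$. Plugging this into $a_{L-1}^{e_1} = b_{L-1}^{e_2}$ and factoring the exponent $e_1 - e_2$ out of every power produces $(a_1^{e_2^{L-2}} a_2^{e_2^{L-3}} \cdots a_{L-2}^{e_2} a_{L-1})^{e_1-e_2} = 1$, whereupon the oddness of $e_1 - e_2$ one last time extracts the unique real root and delivers the claimed identity. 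The converse is immediate: any $\lambda_{1,k}$'s satisfying the relation determine compatible $\lambda_{2,k}$'s via the recursion, and by construction all layer equations are fulfilled, so the resulting tuple lies in $A_1 \cap A_2$. I expect the main obstacle to be purely notational, in the bookkeeping of the telescoping exponents; the underlying algebra is routine once one recognizes that every appeal to odd parity amounts to uniqueness of real odd roots.
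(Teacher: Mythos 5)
Your proof is correct and takes essentially the same route as the paper: match the two representations of the tuple layer by layer, use that a generic $W_k$ satisfying $D W_k = W_k D'$ with $D,D'$ diagonal forces both to be the same scalar multiple of the identity, invoke the odd parity of $e_1-e_2$ to extract unique real roots, and unwind the resulting telescoping recursion to obtain the stated product constraint. The only differences are in bookkeeping (your $\mu_k, a_k, b_k$ recursion versus the paper's running closed form for $D_{2,k}$ in terms of the $\lambda_{1,j}$), not in the underlying argument.
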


\begin{proof}
Take $(\widehat W_1,\dots,\widehat W_L)\in A_1\cap A_2$.  
By definition, there exist diagonal invertible matrices $D_{j,k}$ such that for $k=2, \ldots, L-1$:
\begin{equation}
\widehat W_1=D_{1,1}W_1=D_{2,1}W_1,\qquad
\widehat W_k=D_{1,k}W_kD_{1,k-1}^{-e_1}=D_{2,k}W_kD_{2,k-1}^{-e_2}.
\end{equation}
Since a generic matrix admits no diagonal symmetries, $D_{1,1}=D_{2,1}=:D$.  
For $k=2$ the second equality becomes
\begin{equation}
D_{2,2}^{-1}D_{1,2}\,W_2=W_2\,D^{\Delta},
\end{equation}
where $\Delta:=e_1-e_2$.
Left multiplication by a diagonal matrix scales rows, while right multiplication scales columns. Thus, since $W_2$ is generic, both $D^{\Delta}$ and $D_{2,2}^{-1}D_{1,2}$ are the same scalar multiple of the identity.  Since $\Delta$ is odd, this yields $D=\lambda_{1,1}I$ and $D_{2,2}=\lambda_{1,1}^{-\Delta}D_{1,2}$ for some $\lambda_{1,1} \in \mathbb{R} \setminus \{ 0 \}$.

By plugging this relation into the equality for $k=3$ and repeating the same argument shows that $D_{1,2}=\lambda_{1,2}I$, and that  
$D_{2,3}=\lambda_{1,1}^{-e_2\Delta}\lambda_{1,2}^{-\Delta}D_{1,3}$.  
Inductively, for every $k=1, \ldots, L-1$ we get
\begin{equation}
    \label{eq:diag_equal_k}
    D_{1,k-1}=\lambda_{1,k-1}I,\qquad
D_{2,k}=\lambda_{1,1}^{-\Delta e_2^{k-2}}
\lambda_{1,2}^{-\Delta e_2^{k-3} }
\cdots
\lambda_{1,k-1}^{-\Delta}D_{1,k}.
\end{equation}

At the last layer, the equality  
$W_LD_{1,L-1}^{-e_1}=W_LD_{2,L-1}^{-e_2}$ gives $D_{2,L-1}=D_{1,L-1}^{e_1/e_2}$. Using the equality for $k=L-1$ in \eqref{eq:diag_equal_k}, we get
\begin{equation}
D_{1,L-1}^{\Delta/e_2}=
\lambda_{1,1}^{-\Delta e_2^{L-3}}
\lambda_{1,2}^{-\Delta e_2^{L-4} }
\cdots
\lambda_{1,L-2}^{-\Delta} I.
\end{equation}
Hence, $D_{1,L-1}$ is also a scalar multiple of the identity matrix, say $\lambda_{1,L-1}I$, and thus
\begin{equation}
\lambda_{1,1}^{e_2^{L-2}}
\lambda_{1,2}^{ e_2^{L-3} }
\cdots
\lambda_{1,L-2}^{e_2} \lambda_{1,L-1}=1.
\end{equation}
Conversely, a direct computation reveals that these scalar multiples of the identity yield indeed a point in the intersection $A_1 \cap A_2$.
\end{proof}

Next, we recall a result from literature on identifiability of MLPs with monomial activation functions. 
\begin{thm}[\cite{finkel2024activation}]
\label{thm:identmon}
Suppose that $\sigma(x) = x^r$, with $r \geq 6m^2 - 6m$, where $m =  2\max\{ d_1,\dots,d_{L-1}\}$. For generic $\mathbf{W} \in \mathcal{W}$, the fiber $\varphi^{-1}(f_\mathbf{W})$ consists of weights of the form
\begin{equation}
\label{eq:fiber_d}
\left(P_{i,1} D_{i,1} W_1,\; P_{i,2} D_{i,2} W_2 D_{i,1}^{-\beta_i} P_{i,1}^\top,\;\dots,\; W_L D_{i,L-1}^{-\beta_i} P_{i,L-1}^\top \right),
\end{equation}
where $P_{i,j}$ are permutation matrices and $D_{i,j}$ are invertible diagonal matrices of size $d_j \times d_j$.
\end{thm}
We can finally prove the desired theorem.

\begin{proof}[Proof of Theorem~\ref{thm:dimmainbody}]
We fix a sufficiently large degree $r$.
It is sufficient to show that there exists a polynomial $\sigma$ of degree $r$ for which the theorem holds. 
Indeed, $\varphi$ having finite fibers is equivalent to its Jacobian attaining the maximal rank $\dim(\mathcal W)$ at generic weights $\mathbf{W}$.
The condition that the Jacobian has maximum rank $\dim(\mathcal W)$ is open with respect to the Zariski topology in the coefficient space of $\sigma$. Since open sets are dense, if this condition holds for one polynomial activation of degree $r$, it holds for  a generic one. 

Consider the sparse polynomial 
\begin{equation}
    \label{eq:sparse_activ}
    \sigma(x) := \sum_{i=1}^L  x^{\beta_i}, 
\end{equation}
where $\beta_i > \beta_{i-1}^{L-1}$ for all $i$ (note that $\beta_L = r$). Set $\sigma_{\beta_i}(x):=x^{\beta_i}$. Then, by Lemma \ref{lemm:sparse_activation}, the output of the MLP uniquely decomposes as
\begin{equation}
\label{eq:decomp_to_monomial_beta}
    \sum_{i=1}^L W_L \sigma_{\beta_{i}} W_{L-1} \sigma_{\beta_i} W_{L-2} \cdots \sigma_{\beta_i} W_1(x) + (\text{remaining terms}).
\end{equation}
Each monomial term in \eqref{eq:decomp_to_monomial_beta} is an MLP with monomial activation $x^{\beta_i}$. By Theorem \ref{thm:identmon}, if we choose $\beta_1 \geq 6m^2 - 6m$, where $m =  2\max\{ d_1,\dots,d_{L-1}\}$, then the generic fiber of the parametrization map of such an MLP consists of weights of the form
\begin{equation}
\label{eq:fiber_d}
\left(P_{i,1} D_{i,1} W_1,\; P_{i,2} D_{i,2} W_2 D_{i,1}^{-\beta_i} P_{i,1}^\top,\;\dots,\; W_L D_{i,L-1}^{-\beta_i} P_{i,L-1}^\top \right),
\end{equation}
where $P_{i,j}$ are permutation matrices and $D_{i,j}$ are invertible diagonal matrices of size $d_j \times d_j$. For generic parameters $\mathbf{W}$, the fiber $\varphi^{-1}(f_{\mathbf{W}})$ is contained in the intersection of the sets of such tuples.
In other words, every tuple in the fiber $\varphi^{-1}(f_{\mathbf{W}})$ can be expressed as in \eqref{eq:fiber_d} for \emph{every} $i=1, \ldots, L$.

Since $\mathbf{W}$ is generic and each $D_{i,j}$ is diagonal, it follows that all the corresponding permutation matrices must coincide at the intersection. That is, we have $P_{i,j} = P_{k,j}$ for all $i,j,k$. Since we wish to show finiteness of fibers, without loss of generality, we may assume that these permutation matrices are identities.

Now, by comparing tuples \eqref{eq:fiber_d} for different indices $i$, Lemma \ref{lem:diag_fiber_scalar} implies that each diagonal matrix reduces to a scalar multiple of the identity, that is, $D_{i,j} = \lambda_{i,j} I$ for some $\lambda_{i,j} \in \mathbb{R}^*$.
Moreover, the intersection is characterized explicitly by the polynomial system:
\begin{equation}
\label{eq:vandermonde_finite_sol}
\begin{cases}
\lambda_{L,1}^{\beta_{L-1}^{L-2}} \lambda_{L,2}^{\beta_{L-1}^{L-3}} \cdots \lambda_{L,L-2}^{\beta_{L-1}} \lambda_{L,L-1}-1 = 0, \\
\quad\vdots \\
\lambda_{L,1}^{\beta_1^{L-2}} \lambda_{L,2}^{\beta_1^{L-3}} \cdots \lambda_{L,L-2}^{\beta_1} \lambda_{L,L-1}-1 = 0.
\end{cases}
\end{equation}
We use the lattice-ideal approach in toric geometry to show that this system has finitely many solutions; see \citep[Section~7]{csahin2023rational}. Let $A[i,j] = \beta_{L-i}^{L-1-j}$. Then $A$ is a Vandermonde matrix of size $(L-1)\times(L-1)$ with determinant
\begin{equation}
\det(A)=\pm \prod_{1\le i<j\le L-1}(\beta_j-\beta_i) \neq 0.
\end{equation}
Using the Smith normal form, there exist matrices $U,V\in\operatorname{GL}_{L-1}(\mathbb{Z})$ such that
\begin{equation} 
  UAV = \operatorname{diag}(\ell_1,\dots,\ell_{L-1}),
\end{equation}
where $\ell_1\mid\ell_2\mid\dots\mid\ell_{L-1}$. 
Letting $\Lambda=(\lambda_{L,1},\dots,\lambda_{L,L-1})\in(\mathbb C^*)^{L-1}$, define new coordinates $\Omega=(\omega_1,\dots,\omega_{L-1})$ via
\begin{equation}
\omega_j := \prod_{k=1}^{L-1}\lambda_{L,k}^{V^{-1}[j,k]}. 
\end{equation}
Since $V$ is unimodular, the map $\Lambda\mapsto\Omega$ is an algebraic automorphism of the torus. In these new coordinates, the system \eqref{eq:vandermonde_finite_sol} is equivalent to $\omega_1^{\ell_1}=1,\;\dots,\; \omega_{L-1}^{\ell_{L-1}}=1$,
which has exactly $\ell_1 \cdots \ell_{L-1}<\infty$ solutions in $(\mathbb{C}^*)^{L-1}$. Hence, $\varphi^{-1}(f_{\mathbf{W}})$ is finite, and the claimed dimension follows directly from the fiber-dimension theorem.
\end{proof}

\section{Polynomial Convolutional Networks}\label{sec:appcnn}
In this section, we provide technical details for the proofs in Section \ref{sec:convbias}. We first establish a general result on univariate polynomials $\sigma(x) = \sum_{i=0}^r a_i x^i$. 
\begin{lemm}\label{lem:rigidity}
Suppose that $r > 2$ and that $a_r, a_{r-1} \not = 0$. 
Moreover, let $u_1, \ldots, u_L \in \mathbb{R}\setminus\lbrace 0 \rbrace$ and $\lambda_{1},\dots,\lambda_{L} \in \mathbb{R}$ be such that, for all $x \in \mathbb{R}$,
\begin{equation}\label{eq:main-L}
   \lambda_{L} u_L\,
   \sigma\!\Bigl(
        \lambda_{L-1} u_{L-1}\,
        \sigma\!\bigl(
             \dots \sigma(\lambda_{1}u_1x)\dots\bigr)\bigr)
      \Bigr)
   \;=\;
   u_L\,
   \sigma\!\Bigl(
         u_{L-1}\,
        \sigma\!\bigl(
             \dots \sigma(u_1x)\dots\bigr)\bigr)
      \Bigr).
\end{equation}
Then, $\lambda_1=\cdots=\lambda_L=1$.
\end{lemm}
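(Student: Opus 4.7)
The plan is to induct on $L$, with the base case $L = 1$ being immediate: the identity \eqref{eq:main-L} reduces to $\lambda_1 u_1 x = u_1 x$, so $u_1 \neq 0$ forces $\lambda_1 = 1$. For the inductive step ($L \geq 2$), I would first observe that each $\lambda_i$ must be nonzero, since otherwise the left-hand side of \eqref{eq:main-L} is a constant polynomial in $x$, while the right-hand side has degree exactly $r^{L-1}$ (using $u_i \neq 0$ and $\deg \sigma = r \geq 2$), a contradiction.

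The core step is to extract $\lambda_1 = 1$ by comparing the top two coefficients of both sides as polynomials in $x$. Let $s_j$ denote the polynomial sitting inside the $j$-th nested $\sigma$ on the left, so that $s_j$ has degree $r^j$; write $c_j$ and $c_j'$ for its leading and second-leading coefficients. Expanding $\sigma(P) = a_r P^r + a_{r-1} P^{r-1} + \cdots$ and using $r > 2$ to guarantee that the $a_{r-1} P^{r-1}$ term lands at degree $(r-1)r^j = r^{j+1} - r^j < r^{j+1} - 1$ for $j \geq 1$, I obtain the recursions
\begin{equation*}
    c_{j+1} = a_r (\lambda_{j+1} u_{j+1})^r c_j^r, \qquad c_{j+1}' = a_r \, r \, (\lambda_{j+1} u_{j+1})^r c_j^{r-1} c_j' \quad (j \geq 1),
\end{equation*}
with base values $c_1 = a_r (\lambda_1 u_1)^r$ and $c_1' = a_{r-1}(\lambda_1 u_1)^{r-1}$. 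Telescoping yields $c_{L-1}'/c_{L-1} = r^{L-2} a_{r-1}/(a_r \lambda_1 u_1)$. Since the outermost factor $\lambda_L u_L$ scales both the leading and the second-leading of the left-hand side by the same amount, equality with the right-hand side forces this ratio to match its analogue at $\lambda = 1$, and $a_{r-1} \neq 0$ then yields $\lambda_1 = 1$.

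Knowing $\lambda_1 = 1$, I would substitute $y := \sigma(u_1 x)$. Since $\sigma(u_1 x)$ is nonconstant, its image is infinite, so the identity \eqref{eq:main-L} upgrades to the polynomial identity
\begin{equation*}
    \lambda_L u_L \, \sigma\!\bigl(\lambda_{L-1} u_{L-1} \sigma(\cdots \sigma(\lambda_2 u_2 y)\cdots)\bigr) = u_L \, \sigma\!\bigl(u_{L-1} \sigma(\cdots \sigma(u_2 y)\cdots)\bigr)
\end{equation*}
in the indeterminate $y$, which is precisely the lemma's statement for $L - 1$ with variables $(u_2, \ldots, u_L)$ and $(\lambda_2, \ldots, \lambda_L)$. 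The inductive hypothesis then delivers $\lambda_2 = \cdots = \lambda_L = 1$. The main technical obstacle is the second-leading recursion: one must carefully verify that the $a_{r-1} P^{r-1}$ summand cannot contribute to the coefficient of $x^{r^{j+1}-1}$ at any intermediate layer, which is exactly where the hypotheses $r > 2$ and $a_{r-1} \neq 0$ both come into play; the remainder is routine bookkeeping.
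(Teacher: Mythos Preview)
Your proof is correct and takes a genuinely different route from the paper's. Both arguments ultimately exploit the same invariant—the ratio of the second-leading to the leading coefficient of the nested composition, which isolates $\lambda_1$—but the packaging is quite different. The paper introduces the logarithmic-derivative function $P(x)=x\sigma'(x)/\sigma(x)$, differentiates the identity \eqref{eq:main-L} and divides by it to obtain a product identity $\prod_k P(G_k)=\prod_k P(H_k)$, and then reads off the contradiction from the asymptotic expansion $P(x)=r-\tfrac{a_{r-1}}{a_r x}+O(x^{-2})$ at the first index where the $\lambda$'s disagree. Your approach is more elementary: you track the leading and second-leading coefficients $c_j,c_j'$ explicitly through the recursion, telescope the ratio $c_j'/c_j$, and peel off layers by formal induction on $L$ via the substitution $y=\sigma(u_1x)$. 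Your method avoids rational functions and asymptotics entirely; the paper's method is more conceptual and would more readily yield higher-order information if one needed it. One small remark: the inequality $(r-1)r^j<r^{j+1}-1$ that you invoke actually only requires $r^j>1$, i.e.\ $r\ge 2$ and $j\ge 1$, so the hypothesis $r>2$ is not what drives that particular step—though of course the lemma is stated under $r>2$, so this does not affect correctness.
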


\begin{proof}
Consider the following rational function:
\begin{equation}  
   P(x):=\frac{x\,\sigma'(x)}{\sigma(x)},
\end{equation}
which is well-defined wherever $\sigma(x)\ne0$. Define inductively: 
\begin{equation}
\begin{cases}
     G_{0}(x) :=\lambda_{1} u_1 x, \\
      G_{k}(x):= \lambda_{k+1} u_{k+1} \sigma\!\bigl(G_{k-1}(x)\bigr). 
\end{cases}
\hspace{3em}
\begin{cases}
H_{0}(x) :=u_1x, \\
H_{k}(x) := u_{k+1}\sigma \bigl(H_{k-1}(x)\bigr).
\end{cases}\end{equation}
The left-hand side of \eqref{eq:main-L} coincides with
\(G_{L-1}(x)= \lambda_Lu_L\,\sigma\!\bigl(G_{L-2}(x)\bigr)\), while the right-hand side is
\(H_{L-1}(x) = u_L \sigma \bigl(H_{L-2}(x)\bigr) \). Now, by differentiating \eqref{eq:main-L} and then  dividing by \eqref{eq:main-L}, we obtain
\begin{equation}   
    P\bigl(G_{L-2}\bigr)\,\frac{G_{L-2}'}{G_{L-2}}
  \;=\;
  P\bigl(H_{L-2}\bigr)\,\frac{H_{L-2}'}{H_{L-2}}.
\end{equation}
By applying the chain rule iteratively, the above identity is equivalent to
\begin{equation}\label{eq:prodP}
  \prod_{k=0}^{L-2} P\!\bigl(G_k(x)\bigr)
  \;=\;
  \prod_{k=0}^{L-2} P\!\bigl(H_k(x)\bigr).
\end{equation}
We assume for contradiction that $\lambda_k \not = 1$ for some $k$.  Let $m:=\min\{\,0\le k\le L-2 \mid \lambda_{k+1}\neq1\}$. For every $j<m$, we have that $G_{j}=H_{j}$. Moreover,  $G_{m}(x)=\lambda_{m+1}\,H_{m}(x)$ with $\lambda_{m+1}\neq1$. 
In particular, \eqref{eq:prodP} becomes 
\begin{equation}\label{eq:prodPshort}
  \prod_{k=m}^{L-2} P\!\bigl(G_k(x)\bigr)
  \;=\;
  \prod_{k=m}^{L-2} P\!\bigl(H_k(x)\bigr).
\end{equation}
Now, set $\beta := -\frac{a_{r-1}}{a_r}$.
As $x \to \infty$, the rational function $P$ satisfies $P(x)=r+\frac{\beta}{x}+O\!\bigl(x^{-2}\bigr)$. Thus, for every $k\ge0$, we have
\begin{equation}    
      P\!\bigl(G_{k}(x)\bigr)=
      r+\frac{\beta}{G_{k}(x)}+O\!\bigl(x^{-2\deg G_{k}}\bigr),
      \qquad
      P\!\bigl(H_{k}(x)\bigr)=
      r+\frac{\beta}{H_{k}(x)}+O\!\bigl(x^{-2\deg H_{k}}\bigr).
\end{equation}
Since $\deg G_{k}=\deg H_{k}=r^{k}$, the terms of order up to $-r^m$ in \eqref{eq:prodPshort} are
\begin{equation}
    \left( \prod_{k=m+1}^{L-2} r \right) \cdot (r + \frac{\beta}{G_m(x)}) + O\!\bigl(x^{-r^{m}-1}\bigr)
    = \left( \prod_{k=m+1}^{L-2} r \right) \cdot (r + \frac{\beta}{H_m(x)}) + O\!\bigl(x^{-r^{m}-1}\bigr)
    .
\end{equation}
Using $G_{m}(x)=\lambda_{m+1}\,H_{m}(x)$, we obtain
\begin{equation}
   \beta \frac{1 -  \lambda_{m+1}}{\lambda_{m+1} H_m(x)} = O\!\bigl(x^{-r^{m}-1}\bigr).
\end{equation}
Since the left hand-side is of order $O(x^{-r^{m}})$ and $\beta \ne 0$ by hypothesis, we conclude that $\lambda_{m+1}=1$, which is a contradiction
\end{proof}

\begin{cor} \label{cor:scaledFilters}
    Suppose that $r > 2$, $a_0=0$ and  $a_r, a_{r-1} \not = 0$. 
Moreover, let $w_1 \in \mathbb{R}^{k_1}, \ldots, w_L \in \mathbb{R}^{k_L}$ be non-zero filters and let $\lambda_{1},\dots,\lambda_{L} \in \mathbb{R}$ be such that, for all $x \in \mathbb{R}^{d_0}$,
\begin{equation}\label{eq:scaledFilters} 
    \lambda_L w_{L} \conv{s_{L}} \sigma \left(\cdots \conv{s_2}  \sigma ( \lambda_1 w_1 \conv{s_1} x   ) \right)
    = 
   w_{L} \conv{s_{L}} \sigma \left(\cdots \conv{s_2}  \sigma ( w_1 \conv{s_1} x   ) \right).
\end{equation}
Then, $\lambda_1=\cdots=\lambda_L=1$.
\end{cor}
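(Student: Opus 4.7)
The strategy is to reduce the vector-valued identity \eqref{eq:scaledFilters} to the scalar composition identity of Lemma~\ref{lem:rigidity} by evaluating at a one-parameter family of inputs $x = t\,x_0$ for a suitably chosen $x_0 \in \mathbb{R}^{d_0}$.

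The key preliminary is that, since $w_1 \neq 0$, the linear map $x \mapsto w_1 \conv{s_1} x$ from $\mathbb{R}^{d_0}$ onto $\mathbb{R}^{d_1}$ is surjective. I would prove this by a Vandermonde-type argument: substituting the parametric family $x(z) = (z^{m-1})_m$ into the convolution gives $(w_1 \conv{s_1} x(z))[i] = z^{s_1(i-1)} \hat w_1(z)$ where $\hat w_1(z) := \sum_j w_1[j]z^{j-1}$; evaluating at $d_1$ generic real values of $z$ with $\hat w_1(z) \neq 0$ produces $d_1$ linearly independent vectors in the image. Consequently, $x_0$ can be chosen so that $w_1 \conv{s_1} x_0$ equals any prescribed vector in $\mathbb{R}^{d_1}$.

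The primary reduction picks $x_0$ with $w_1 \conv{s_1} x_0 = \mathbf{1}$. Because $\sigma$ acts coordinate-wise, $\sigma(c\mathbf{1}) = \sigma(c)\mathbf{1}$ for any scalar $c$; and because $w_i \conv{s_i}(c\mathbf{1}) = c\,\alpha_i\,\mathbf{1}$ with $\alpha_i := \sum_j w_i[j]$, under $x = t x_0$ the layer outputs on both sides of \eqref{eq:scaledFilters} remain scalar multiples of $\mathbf{1}$ throughout the network. Equating scalar coefficients yields
\[
\lambda_L \alpha_L\,\sigma\!\bigl(\lambda_{L-1}\alpha_{L-1}\sigma(\cdots\sigma(\lambda_1 t)\cdots)\bigr) \;=\; \alpha_L\,\sigma\!\bigl(\alpha_{L-1}\sigma(\cdots\sigma(t)\cdots)\bigr),
\]
which matches Lemma~\ref{lem:rigidity} with $u_1 = 1$ and $u_i = \alpha_i$ for $i \geq 2$, so the lemma immediately forces $\lambda_i = 1$ whenever all $\alpha_i$ are non-zero. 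For filters with some $\alpha_i = 0$, I would use the alternative substitutions $w_1 \conv{s_1} x_0 = e_1$ or $e_{d_1}$: a direct computation from the definition of $\conv{s_i}$ shows $w_i \conv{s_i} e_1 = w_i[1]\,e_1$ and $w_i \conv{s_i} e_{d_{i-1}} = w_i[k_i]\,e_{d_i}$, producing scalar identities of Lemma~\ref{lem:rigidity}'s form with $u_i$ replaced by $w_i[1]$ or $w_i[k_i]$, respectively.

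The main obstacle is the fully degenerate case in which $\alpha_i = w_i[1] = w_i[k_i] = 0$ simultaneously for some layer $i$ (e.g.\ $w_i = (0,1,-1,0)$), so that none of the three clean substitutions propagate. Here a refined path-selection argument is needed: exploiting that each $w_i$ still has some non-zero entry $w_i[j_i^{\ast}]$, one constructs $x_0$ so that only the single combinatorial path $(j_1^{\ast}, \ldots, j_L^{\ast})$ through the network survives in the layerwise computation, yielding a scalar identity again in the form of Lemma~\ref{lem:rigidity} with $u_i = w_i[j_i^{\ast}]$. Verifying that such an $x_0$ exists for arbitrary non-zero filters, using iterated surjectivity of the intermediate convolutions, constitutes the technical crux of the proof.
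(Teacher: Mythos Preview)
Your strategy of specializing to a one-parameter family $x=t x_0$ and invoking Lemma~\ref{lem:rigidity} is the right one, and your three ``clean'' substitutions (all-ones, $e_1$, $e_{d_1}$) are correct when they apply. The gap is exactly the one you flag: when some layer has $\alpha_i = w_i[1] = w_i[k_i] = 0$, none of the three work, and your proposed fix is not quite right as stated. You cannot in general choose $x_0$ so that \emph{only} the path through arbitrary indices $(j_1^\ast,\ldots,j_L^\ast)$ survives, because after the first layer the intermediate vectors are determined and convolution with $w_2,\ldots,w_L$ will typically light up several entries; ``iterated surjectivity'' does not help since you have already committed to $x_0$. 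What \emph{does} work is to take each $j_k^\ast$ to be the \emph{first} non-zero index of $w_k$: then a direct induction shows that at each layer the entries at positions strictly to the right of the distinguished position $p_k$ vanish (because they would require filter entries with index below $j_k^\ast$), so the output coordinate at position $p_L=1$ is exactly the scalar composition $w_L[j_L^\ast]\,\sigma\bigl(w_{L-1}[j_{L-1}^\ast]\,\sigma(\cdots)\bigr)$, even though intermediate layers carry extra non-zero entries to the left.

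The paper executes this idea more simply: instead of invoking surjectivity to prescribe $w_1\conv{s_1}x_0$, it substitutes $x_j\mapsto 0$ for all $j\neq i$, where $i$ is the smallest input index appearing in $f_{\mathbf w}$. This index is precisely the one reached by the path through the first non-zero filter entries, and the same one-sided vanishing argument shows that the first output coordinate becomes the univariate identity of Lemma~\ref{lem:rigidity} with $u_k=w_k[j_k^\ast]\neq 0$. This handles all non-zero filters at once with no case analysis and no surjectivity lemma. Your $e_1$ fallback is in fact the special case $j_k^\ast=1$ of this argument.
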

\begin{proof}
    Let $i = 1, \ldots, d_0$ be the smallest index such that the variable $x_i$ appears in one of the monomials of \eqref{eq:scaledFilters} with non-zero coefficient. 
    Substituting $x_j \mapsto 0$ for all $j \neq i$ in \eqref{eq:scaledFilters} yields the equality of univariate polynomials in \eqref{eq:main-L}, where $u_k$ is the first non-zero entry of the filter $w_k$. Thus, the claim follows from Lemma \ref{lem:rigidity}.
\end{proof}

\subsection{Proof of Theorem \ref{thm:cnn_regular_app}}\label{sec:proofcnnthm}

\begin{proof}
We first show regularity. Similarly to the proof of Theorem \ref{thm:dimmainbody}, we will do so for a specific activation function, which will imply the statement for a generic one. Consider the polynomial activation defined by
\begin{equation}
    \sigma(x) = x^{\beta_L} + \dots + x^{\beta_1},
\end{equation}
where $\beta_1 > 1$, $\beta_L = r$, and $\beta_j > \beta_{j-1}^{(L-1)}$ for all $j > 1$. We will show that the parametrization map $\varphi$ with this activation is regular on $\mathcal{W} \setminus \varphi^{-1}(0)$. 
Note that $\varphi^{-1}(0)$ consists precisely of those filter tuples where one of the filter is zero, because convolving with a non-zero filter is a full-rank linear map \citep[Lemma 4.1]{shahverdi2024geometryoptimizationpolynomialconvolutional}. 

By Lemma \ref{lemm:sparse_activation}, the CNN output decomposes uniquely as
\begin{equation}
\label{eq:cnn_decomp_poly}
f_{\mathbf{w}} = \sum_{i=1}^L w_{L} \star_{s_{L}} \sigma_{\beta_i}\left(\dots \star_{s_2} \sigma_{\beta_i}(w_1 \star_{s_1} x)\right) + (\text{remaining terms}).
\end{equation}
From \citep[Proposition A.2]{shahverdi2024geometryoptimizationpolynomialconvolutional}, if $f_{\mathbf{w}} \neq 0$, the kernel of the differential of the CNN parametrization with monomial activation $\sigma_{\beta_i}$ is explicitly given by the subspace
\begin{equation}
\label{eq:ker_cnn}
    \mathcal{A}_i := \left \{\lambda_{i,1}w_1,(-{\lambda_{i,1}\beta_i+\lambda_{i,2}})w_2,\ldots, (-\lambda_{i,L-2}\beta_i+{\lambda_{i,L-1}})w_{L-1},-\lambda_{i,L-1}\beta_{i}w_L \mid \lambda_{i,j}\in \mathbb{R} \right\}
\end{equation}
Using the decomposition \eqref{eq:cnn_decomp_poly}, we deduce that the kernel of the Jacobian $\mathbf{J}_{\mathbf{w}}\varphi$ is contained in the intersection $\bigcap_{i=1}^{L} \mathcal{A}_i$. 
Now, let $v_j \in \mathbb{R}^L$ contain the coefficients of the filters $w_k$ in $\mathcal{A}_j$ and define 
\begin{equation}
\gamma_j := (\beta_j^{L-1}, \beta_j^{L-2}, \ldots, \beta_j, 1)^\top.
\end{equation}
Then, $\langle \gamma_j,v_j \rangle = 0$. Hence, in $\bigcap_{i=1}^{L} \mathcal{A}_i$, we must have
\begin{equation}
    \label{eq:vandermonde_cnn}
    \begin{pmatrix}
\beta_{L}^{L-1} & \beta_{L}^{L-2} & \cdots & \beta_{L} & 1 \\\\
\beta_{L-1}^{L-1} & \beta_{L-1}^{L-2} & \cdots & \beta_{L-1} & 1 \\\\
\vdots & \vdots & \ddots & \vdots & \vdots \\\\
\beta_{1}^{L-1} & \beta_{1}^{L-2} & \cdots & \beta_{1} & 1
\end{pmatrix}
v_L
= 0.
\end{equation}

Since $\beta_i - \beta_j \neq 0$ for all $i \neq j$, the matrix in \eqref{eq:vandermonde_cnn} is a full-rank Vandermonde matrix, which is invertible. Therefore, the only solution is $v_L=0$. It follows that $\bigcap_{i=1}^{L} \mathcal{A}_i = \{0\}$, and hence the parameterization map $\varphi$ has an injective differential over $\mathcal{W} \setminus \varphi^{-1}(0)$. In other words, $\varphi$ is an immersion away from $\varphi^{-1}(0)$.

Since $\varphi$ is an immersion at generic weights $\mathbf{w} \in \mathcal{W}$, it necessarily has finite fibers. We wish to show that the generic fiber is a singleton. To this end, choose weights $\mathbf w=(w_1,\dots,w_L)$, where $w_i=(1,1,\dots,1)$ for all $i$. From \citep[Theorem~4.6]{shahverdi2024geometryoptimizationpolynomialconvolutional}, it follows that the fiber $\varphi^{-1}(f_{\mathbf{w}})$ is contained in $\{(\lambda_1 w_1,\ldots,\lambda_L w_L) \mid \lambda_i \in \mathbb{R}\}$. But then Corollary \ref{cor:scaledFilters} implies that $\varphi^{-1}(f_{\mathbf w})=\{\mathbf w\}$ is a singleton. Since $\varphi$ is regular at $\mathbf w$, there is an open neighborhood around $\mathbf w$ such that every fiber remains a singleton. Since open sets are dense in the Zariski topology, we conclude that the generic fiber is a singleton. 
\end{proof}

\subsection{Proof of Theorem \ref{thm:singcnn}}\label{sec:proofsingcnn}
\begin{proof}
Since $\varphi$ is regular, generically one-to-one and has finite fibers over $ \mathcal{W} \setminus \varphi^{-1}(0)$, the property that $f_\mathbf{w}$ is a singular point is equivalent to the corresponding fiber $\varphi^{-1}(f_\mathbf{w})$ not being a singleton.

We start by assuming that $f_{\mathbf{w}} \neq 0$ is a singular point, i.e., there is another filter vector ${\mathbf{w}'}$ such that $f_{\mathbf{w}} = f_{\mathbf{w}'}$.
By Corollary \ref{cor:scaledFilters}, ${\mathbf{w}'}$ differs from $\mathbf{w}$ by more than just layerwise scalings by constants. 
The monomials of maximal degree appearing in $f_{\mathbf{w}}(x)$ are
\begin{equation}
    w_{L} \star_{s_{L}} \sigma_{r}\left(\dots \star_{s_2} \sigma_{r}(w_1 \star_{s_1} x)\right)= w'_{L} \star_{s_{L}} \sigma_{r}\left(\dots \star_{s_2} \sigma_{r}(w'_1 \star_{s_1} x)\right).
\end{equation}
In particular, $\mathbf{w}$ and ${\mathbf{w}'}$ also give rise to the same monomial CNN. 
Now, \citep[Theorem 4.6]{shahverdi2024geometryoptimizationpolynomialconvolutional} states that $\mathbf{w}$ is a proper $\mathbf{A}$-subnetwork for some $\mathbf{A}$, and that the filters in ${\mathbf{w}'}$ are shifted versions of the filters in $\mathbf{w}$.
These shifts need to satisfy $\tilde{t}_i \in \mathbb{Z}$ and $\tilde{t}_{L} = 0$ according to \citep[Remark 4.2]{shahverdi2024geometryoptimizationpolynomialconvolutional}.

For the converse direction, we consider a proper $\mathbf{A}$-subnetwork $\mathbf{w}$ that satisfies the assumptions of the statement.
To this end, define new weights $\mathbf{w}'=(w_1', \ldots, w_L')$ as: 
\begin{equation}
w_i' = \begin{cases}
    \left(w_i[{t}_i + 1 ], w_i[{t}_i + 2], \ldots, w_i[k_i], 0, \ldots, 0 \right), & {t}_i \geq 0, \\
        \left( 0, \ldots, 0, w_i[1], w_i[2], \ldots, w_i[k_i + {t}_i] \right), & {t}_i \leq 0. \\
\end{cases}
\end{equation}
Since each filter $w_i$ is non-zero and $t_j \neq 0$ for some $j$, it holds that $\mathbf{w} \not = \mathbf{w}'$. Since $\tilde{t}_L = 0$, by leveraging on the equivariance property of convolutions, it is an immediate calculation to verify that $f_\mathbf{w} = f_{\mathbf{w}'}$, as desired. 
\end{proof}

\subsection{Proof of Proposition \ref{prop:cnnexp}}\label{sec:proofcnnexp}
\begin{proof}
Since $\varphi$ if regular, $\textnormal{dim}(\textnormal{im}(\jacob_\mathbf{W}\varphi)^\perp) = \textnormal{dim}(\mathcal{V}) - \textnormal{dim}(\mfldconv)$ for every $\mathbf{W} \in S$. Since $\varphi$ has finite fibers, \eqref{eq:affamily} defines a family of affine subspaces of $\mathcal{V}$, where only a finite number of subspaces is indexed by a given point in $\varphi(S)$. Therefore, $\textnormal{dim}(U_S) \leq \textnormal{dim}(\varphi(S)) +  \textnormal{dim}(\mathcal{V}) - \textnormal{dim}(\mfldconv)$. Since $S$ is contained in an algebraic variety strictly contained in $\mathcal{W}$, $\textnormal{dim}(\varphi(S)) < \textnormal{dim}(\mfldconv)$. We conclude that $\textnormal{dim}(U_S) < \textnormal{dim}(\mathcal{V})$, implying that $U_S$ must have empty interior.  
\end{proof}

\section{Explicit Examples}
In this section, we discuss explicit examples of both MLPs and CNNs. 
\subsection{MLP}
\label{sec:exmp}
In the following example, we compute the singular locus of a shallow MLP with a polynomial activation, and verify that the subnetworks are critically exposed. The computations are performed via \texttt{Macaulay2} \citep{M2} -- a symbolic algebra software.

Consider the shallow MLP with architecture $\mathbf{d} = (2,2,1)$ and activation
$\sigma(x) = x^3 + x^2$, and parametrization
\begin{equation}
\begin{aligned}
\varphi\Bigl(\begin{bmatrix} a & b \\ c & d \end{bmatrix}, [e,f]\Bigr)
={}& (a^3 e + c^3 f)\,x_1^3
     + (3 a^2 b e + 3 c^2 d f)\,x_1^2 x_2
\\
&\quad
   + (a^2 e + c^2 f)\,x_1^2
   + (3 a b^2 e + 3 c d^2 f)\,x_1 x_2^2
\\
&\quad
   + (2 a b e + 2 c d f)\,x_1 x_2
   + (b^3 e + d^3 f)\,x_2^3
   + (b^2 e + d^2 f)\,x_2^2 .
\end{aligned}
\end{equation}
We identify the output with its coefficient vector in $\mathbb{R}^7$
relative to the lex-ordered basis
$(x_1^3, x_1^2 x_2, x_1^2, x_1 x_2^2, x_1 x_2, x_2^3, x_2^2)$, and write
\begin{equation}
\varphi\!\left(\begin{bmatrix} a & b \\ c & d \end{bmatrix}, [e,f]\right)=t
= (t_1,\dots,t_7),
\end{equation}
where \(t_i\) denotes the coefficient of the \(i\)-th basis monomial. Then the Zariski closure $\overline{\mathcal{M}}_{\mathbf{d},\sigma}$ of the neuromanifold ${\mathcal{M}}_{\mathbf{d},\sigma}$ in $\mathbb{R}^7$ is the hypersurface $F=0$, where
\begin{equation}
F=2 t_3 t_4^2 - t_2 t_4 t_5 - 6 t_2 t_3 t_6 + 9 t_1 t_5 t_6 + 2 t_2^2 t_7 - 6 t_1 t_4 t_7 .
\end{equation}
The singular locus of $\overline{\mathcal{M}}_{\mathbf{d},\sigma}$ is the subvariety defined by $\nabla F = 0$. After simplifying the equations, this locus is a $3$-dimensional variety cut out by the following polynomials:
\begin{equation}
\begin{aligned}
&3 t_{5} t_{6} - 2 t_{4} t_{7},\qquad
3 t_{3} t_{6} - t_{2} t_{7},\qquad
t_{4} t_{5} - 2 t_{2} t_{7},\qquad
t_{2} t_{5} - 6 t_{1} t_{7},\\[2mm]
&t_{4}^{2} - 3 t_{2} t_{6},\qquad
t_{3} t_{4} - 3 t_{1} t_{7},\qquad
t_{2} t_{4} - 9 t_{1} t_{6},\qquad
2 t_{2} t_{3} - 3 t_{1} t_{5},\qquad
t_{2}^{2} - 3 t_{1} t_{4}.
\end{aligned}
\end{equation}
These equations precisely correspond to the subnetworks, that is, to those weight matrices $\mathbf{W} = (W_1,W_2)$ for which either $e=0$, or $f=0$, or $a=c$ and $b=d$.

We now show that the parameter points corresponding to these singularities are critically exposed. To this end, note that the Jacobian of the parametrization with respect to $(a,b,c,d,e,f)$ is


\begin{equation}
\jacob_\mathbf{W} \varphi=
\begin{bmatrix}
3a^2 e & 0       & 3c^2 f & 0       & a^3     & c^3 \\
6ab e  & 3a^2 e  & 6cd f  & 3c^2 f  & 3a^2 b  & 3c^2 d \\
2a e   & 0       & 2c f   & 0       & a^2     & c^2 \\
3b^2 e & 6ab e   & 3d^2 f & 6cd f   & 3ab^2   & 3cd^2 \\
2b e   & 2a e    & 2d f   & 2c f    & 2ab     & 2cd \\
0      & 3b^2 e  & 0      & 3d^2 f  & b^3     & d^3 \\
0      & 2b e    & 0      & 2d f    & b^2     & d^2
\end{bmatrix},
\end{equation}
and evaluating at points with $f=0$ gives
\begin{equation}
\begin{bmatrix}
3a^2 e & 0       & 0 & 0 & a^3   & c^3 \\
6ab e  & 3a^2 e  & 0 & 0 & 3a^2 b& 3c^2 d \\
2a e   & 0       & 0 & 0 & a^2   & c^2 \\
3b^2 e & 6ab e   & 0 & 0 & 3ab^2 & 3cd^2 \\
2b e   & 2a e    & 0 & 0 & 2ab   & 2cd \\
0      & 3b^2 e  & 0 & 0 & b^3   & d^3 \\
0      & 2b e    & 0 & 0 & b^2   & d^2
\end{bmatrix}.
\end{equation}
The tangent directions to the subnetwork $f=0$ are $(a,b,c,d,e)$; the $c,d$
columns vanish here, so the effective Jacobian on the subnetwork is the
$7\times 3$ matrix formed by the $(a,b,e)$-columns, which we denote by $\jacob_{f=0}^{\mathrm{sub}}\varphi$. 
The $3\times 3$ minor using the first three rows equals $3 a^6 e^2$. Hence, for
general $\mathbf{W}$ with $f=0$ and $a\neq 0$ and $e\neq 0$, we have that
\begin{equation}
\operatorname{rank} \jacob_{f=0}^{\mathrm{sub}}\varphi=3,
\end{equation}
which shows that
\begin{equation}
\dim\{\,u\in\mathbb{R}^7 \mid (\jacob_{\mathbf{W}} \varphi)^{\top} \cdot (t-u)=0\,\}
= 7-\operatorname{rank}\jacob_{f=0}^{\mathrm{sub}}\varphi=4.
\end{equation}
As a consequence, the right-hand side of \eqref{eq:affamily} has dimension $3 + 4 = 7$, which equals the ambient dimension of $\mathbb{R}^7$. This shows, from a purely-geometrical perspective, that the set of subnetworks is critically exposed.



\subsection{CNN}
\label{app:cnn}

Here, we illustrate the technical condition in Theorem~\ref{thm:singcnn} for a minimal example. We consider the shallow CNN
\begin{equation}
    \begin{bmatrix}
        d & e
    \end{bmatrix}
    \cdot \sigma \left( \begin{bmatrix}
        a & b & c & 0 & 0 \\
        0 & 0 & a & b & c
    \end{bmatrix} x \right)
\end{equation}
of stride 2, with filters $w_1 = \begin{bmatrix}
    a & b & c
\end{bmatrix}, w _2 = \begin{bmatrix}
    d & e
\end{bmatrix}$,
with input $x = \begin{bmatrix}
    x_1 & x_2 & x_3 & x_4 & x_5
\end{bmatrix}^\top$, and with activation $\sigma(y) = y^2+y $.
Using \texttt{Macaulay2}, we can easily check that the network parametrization map $\varphi$ is regular (away from the parameters $w_1=0$ or $w_2=0$ mapping to the zero function) and almost everywhere injective. 

We can obtain many subnetworks following Definition \ref{def:subnetCNN}, e.g., by setting $a=0$, or $a=b=0$, or $e=0$, or $a=b=e=0$.
Out of those subnetworks, we will see now that only the latter constitutes a singularity. 

The subnetwork $a=b=e=0$ corresponds to the function $x \mapsto d \cdot \sigma (c x_3)$, which can be embedded in two different ways as a subnetwork into the original architecture:
\begin{equation}
    \begin{bmatrix}
        d & 0
    \end{bmatrix}
    \cdot \sigma \left( \begin{bmatrix}
        0 & 0 & c & 0 & 0 \\
        0 & 0 & 0 & 0 & c
    \end{bmatrix} x \right)
 = 
    \begin{bmatrix}
        0 & d
    \end{bmatrix}
    \cdot \sigma \left( \begin{bmatrix}
        c & 0 & 0 & 0 & 0 \\
        0 & 0 & c & 0 & 0
    \end{bmatrix} x \right).
\end{equation}
These two subnetworks are in the same fiber of the network parametrization map $\varphi$.
Since $\varphi$ is regular and generically one-to-one, this means that the (two equivalent) subnetworks yield a singularity on the neuromanifold. 
This can also be verified via explicit calculations in \texttt{Macaulay2}: For that, we first compute a vanishing ideal of the neuromanifold and then check that the function $x \mapsto d \cdot \sigma (c x_3)$ is indeed a singularity by seeing that the Jacobian of the ideal at that function has lower rank than the expected rank.

For each of the other mentioned subnetworks, namely $a=0$ or $a=b=0$ or $e=0$, there is a unique way of embedding the subnetwork into the original architecture (assuming that all unspecified parameters are non-zero). Thus, they do not cause singularities on the neuromanifold. Again, this can be verified in \texttt{Macaulay2} by computing that the rank of the Jacobian of the vanishing ideal of the neuromanifold is the expected one. 

We can easily check that this behavior is encoded in the combinatorial condition on the $\tilde t_i$ given in Theorem~\ref{thm:singcnn}:
\begin{center}
\vspace{.3em}
\renewcommand{\arraystretch}{1.4}
\begin{tabular}{c|c|c|c|c}
     & $a=0$ & $a=b=0$ & $e=0$ & $a=b=e=0$ \\\hline
   $(\tilde t_1, \tilde t_2)$  & $(1, \frac{1}{2})$ & $(2, 1)$ & $(0, -1)$ & $(2,0)$ 
\end{tabular}
\end{center}
The general intuition is that, for a subnetwork to be embeddable in several ways into a given architecture, the added zeroes in one layer have to be ``canceled out'' in the next layer (compatible with the stride).

\end{document}